\newtheorem{theorem}{Theorem}
\newtheorem{definition}{Definition}
\newcommand{\Amc}[0]{{{\mathcal{A}}}}
\newcommand{\Dmc}[0]{{{\mathcal{D}}}}
\newcommand{\Hmc}[0]{{{\mathcal{H}}}}
\newcommand{\Lmc}[0]{{{\mathcal{L}}}}
\newcommand{\Mmc}[0]{{{\mathcal{M}}}}
\newcommand{\Pmc}[0]{{{\mathcal{P}}}}
\newcommand{\Rmc}[0]{{{\mathcal{R}}}}
\newcommand{\Smc}[0]{{{\mathcal{S}}}}
\newcommand{\Xmc}[0]{{{\mathcal{X}}}}
\newcommand{\Zmc}[0]{{{\mathcal{Z}}}}
\newcommand{\piv}[0]{{\bm{\pi}}}
\newcommand{\muv}[0]{{\bm{\mu}}}
\newcommand{\thetav}[0]{{\bm{\theta}}}
\newcommand{\Ebb}{\mathbb{E}}
\newcommand{\Rbb}{\mathbb{R}}
\newcommand{\Nbb}{\mathbb{N}}
\DeclareMathOperator*{\argmax}{arg\,max}
\newcommand{\KL}{\mathrm{KL}}
\title[Adversarial IRL for MFGs]{Adversarial Inverse Reinforcement Learning for\\ Mean Field Games}
\author{Yang Chen}
\affiliation{
\institution{Strong AI Lab, University of Auckland}
\city{Auckland}
  \country{New Zealand}}
\email{yang.chen@auckland.ac.nz}
\author{Libo Zhang}
\affiliation{
\institution{University of Auckland}
\city{Auckland}
  \country{New Zealand}}
\email{lzha797@aucklanduni.ac.nz}
\author{Jiamou Liu}
\affiliation{
\institution{University of Auckland}
\city{Auckland}
  \country{New Zealand}}
\email{jiamou.liu@auckland.ac.nz}
\author{Michael Witbrock}
\affiliation{
\institution{Strong AI Lab, University of Auckland}
\city{Auckland}
  \country{New Zealand}}
\email{m.witbrock@auckland.ac.nz}
\begin{abstract}
Goal-based agents respond to environments and adjust behaviour accordingly to reach objectives. Understanding incentives of interacting agents from observed behaviour is a core problem in multi-agent systems. Inverse reinforcement learning (IRL) solves this problem, which infers underlying reward functions by observing the behaviour of rational agents. Despite IRL being principled, it becomes intractable when the number of agents grows because of the curse of dimensionality and the explosion of agent interactions. The formalism of Mean field games (MFGs) has gained momentum as a mathematically tractable paradigm for studying large-scale multi-agent systems. By grounding IRL in MFGs, recent research attempts to push the limits of the agent number in IRL. However, the study of IRL for MFGs is far from being mature as existing methods assume strong rationality, while real-world agents often exhibit bounded rationality due to the limited cognitive or computational capacity. Towards a more general and practical IRL framework for MFGs, this paper proposes Mean-Field Adversarial IRL, a novel framework capable of tolerating bounded rationality. We build it upon the maximum entropy principle, adversarial learning, and a new equilibrium concept for MFGs. We evaluate our machinery on simulated tasks with imperfect demonstrations resulting from bounded rationality. Experimental results demonstrate the superiority of MF-AIRL over existing methods in reward recovery.
\end{abstract}
\keywords{Inverse Reinforcement Learning; Mean Field Games; Maximum Entropy Principle; Adversarial Learning}
\newcommand{\BibTeX}{\rm B\kern-.05em{\sc i\kern-.025em b}\kern-.08em\TeX}
\begin{document}


\pagestyle{fancy}
\fancyhead{}


\maketitle 
\section{Introduction}\label{sec:intro}

Game theory provides a general framework for predicting the strategic behaviours of interacting agents \citep{fudenberg1991game}. It concerns itself with a set of reward (or utility) functions and understands the behaviours of rational agents, i.e., the equilibrium, to be the outcome of utility-maximising strategies. An inverse problem naturally arises from this setting: how to infer the reward functions from observed behaviours? This problem is known as {\em multi-agent inverse reinforcement learning} (MAIRL) \citep{song2018multi}. More precisely, it seeks to find domain parameters for reward functions that induce observed behaviours of rational decision-makers.


The recent surge in the scale of real-world multi-agent systems (MAS) \cite{garcia2002software,munz2008delay,zhou2019factorized} has raised the need for solving MAIRL in the presence of a large number of agents. In fact, we can identify at least two motivations for MAIRL in large-scale MAS. A straightforward one is as its name implies -- we wish to detect and understand the behaviour of a population of agents by modelling them using reward functions. Examples include modelling infection spread \citep{lee2021reinforcement}, discovering pricing strategies in large-scale markets \citep{subramanian2019reinforcement} and understanding the mechanism of social-norm emergence in a large population \citep{morris2019norm}. A second motivation is for the sake of designing environments for large-scale MAS so that the expected behaviour emerges if agents are rational. The behaviour of a MAS is uncertain, and unexpected behaviour is likely to arise; the growing size of the system will further exacerbate this. If one is able to pinpoint the causal relations between rewards and rational behaviours, he can manipulate the system's behaviour by tuning the reward functions.  Contrary to the examples above, applications here can be controlling and restraining infection spread, making pricing strategies in large-scale markets, and even guiding and shaping the emergence of social norms in communities, all in a data-driven manner.


However, MAIRL is notoriously intractable in the face of a large number of agents. 
This is because MAIRL typically takes {\em stochastic games} \citep{levy2020stochastic} as the mathematical model, where a (Nash) equilibrium is computationally intractable when the number of agents scales to tens or hundreds \cite{daskalakis2009complexity}. 
To accommodate the need for MAIRL in large-scale MAS, we thus ask for a mathematically tractable substitute model. The recent paradigm of {\em mean field games} (MFGs) \citep{lasry2007mean,huang2006large} achieves tractability by borrowing the idea of mean field approximation from statistical physics to simplify agent interactions. It takes the limit as the number of agents approaches infinity and reduces the whole-system interactions to those between a single individual agent and the {\em mean field}, a virtual agent that represents aggregated behaviour of the population at large. This dual-view interplay gives rise to {\em mean field Nash equilibrium} (MFNE), which stipulates bidirectional constraints between the two sides: every agent's policy maximises its rewards given the mean field and, in turn, the mean field is uniquely resulted by all agents' policies. Importantly, MFNE is shown to be an approximate Nash equilibrium in the corresponding finite-agent stochastic games \cite{saldi2018markov}. 
To break through the limitation of agent number in MAIRL, it is thereby promising to transfer the concept of IRL to MFGs.

Unfortunately, IRL remains largely unexplored in MFGs, albeit there are two recent attempts. \citet{yang2018learning} first proposed a {\em centralised} IRL method for MFGs by showing that an MFG can be reduced to a {\em Markov decision process} (MDP) that describes the population's collective behaviour and average rewards; they thus applied single-agent IRL methods on top of this MDP. Subsequently, \citet{chen2022individual} revealed that this reduction holds only if the MFNE is {\em socially optimal}, i.e., it maximises the population's average rewards; they thus framed the problem at the {\em decentralised} setting, i.e., inferring the reward function for an individual agent from the observed individual behaviour rather than the population behaviour. They put forward {\em Mean Field IRL} (MFIRL), 
a more general approach effective for both socially optimal and ordinary MFNE. 

However, both methods above are still limited in terms of practical use. 
First, due to limited cognitive or computational capability, an agent often has {\em bounded rationality} in real life, i.e., choosing satisfactory rather than optimal actions \citep{harstad2013bounded}.  Consequently, the resulting behaviour possesses uncertainties in general. For example, a customer in a restaurant orders an acceptable dish which is not necessarily his favourite as he is rush in time. These two methods cannot reason about such uncertainties as agents in MFNE are assumed never to take suboptimal behaviour. This makes both methods unsuitable for situations where the agents are bounded rational. 
Then, since an MFNE is not unique in general \citep{cui2021approximately}, the observed behaviour may be insufficient for us to learn a reward function that determines a unique policy. In this sense, from an application standpoint, the existing methods are also not useful for environment design for large-scale MAS.

Towards a more general and practical IRL method for MFGs, we invoke the idea of Maximum Entropy IRL (MaxEnt IRL) \citep{ziebart2008maximum,ziebart2010modeling}, which provides a general probabilistic framework to tackle bounded-rational behaviour. It is state-action trajectory-centric and assumes the observed trajectories follow a distribution (in terms of rewards) with the maximum entropy. It thus allows us to find a reward function that rationalises observed behaviour with the {\em least commitment}. Moreover, since a policy leading to the maximum entropy trajectory distribution is unique given a reward function, MaxEnt IRL is more useful for environment design.
However, extending MaxEnt IRL to MFGs is challenging. First, since MFNE assumes agents never take suboptimal behaviour, it is incompatible with MaxEnt IRL in the sense that it cannot provide a trajectory distribution that can be used for the probabilistic reward inference. Second, since in MFGs, the individual and population dynamics are intertwined (the policy and mean field are interdependent), the trajectory distribution is intractable to express in terms of rewards, which would prevent us from performing probabilistic inference for reward functions. 


The primary contribution of this paper lies in the proposal of a new probabilistic IRL framework, {\em Mean-Field Adversarial IRL} (MF-AIRL), for MFGs. MF-AIRL integrates ideas from decentralised IRL for MFGs, MaxEnt IRL, and adversarial learning \citep{goodfellow2014generative} into a unified probabilistic model for reward inferences in large-scale MAS. 
We summarise specific contributions as follows: 
{\bf (1)} We build MF-AIRL upon a new equilibrium concept termed {\em entropy-regularised MFNE} (ERMFNE) 
(see Sec.~\ref{sec:ERMFNE}). We show that ERMFNE can characterise an individual's trajectory distribution induced by a reward function in a principled way (see Theorem~\ref{thm:trajectory_ERMFNE}). 
{\bf (2)} Taking ERMFNE as the solution concept, we extend MaxEnt IRL to MFGs (see Sec.~\ref{sec:extension}). We decouple individual and population dynamics by deriving the empirical value of the mean field from the observed behaviour (see Theorem~\ref{thm:consistent}). 
{\bf (3)} By using adversarial learning to solve MaxEnt IRL in MFGs efficiently, we develop the practical MF-AIRL framework (see Sec.~\ref{sec:MF-AIRL}).
{\bf (4)} We evaluate MF-AIRL on tasks that simulate scenarios of marketing strategy making, virus propagation modelling and social norm emergence, all on a large scale 
 (see Sec.~\ref{sec:experiments}). Results demonstrate the outperformance of MF-AIRL over existing methods in reward recovery.

\section{Preliminaries}\label{sec:pre}
This section introduces {\em mean field games} (MFGs) and the {\em maximum entropy inverse reinforcement learning} (MaxEnt IRL) framework. The marriage of the two gives rise to our proposed multi-agent IRL approach dedicated to large-scale multi-agent systems.

\subsection{Mean Field Games}
Following the conventional MFG model in the learning setting, we focus on MFGs with finite state and action spaces and, more generally, a finite time horizon \citep{elie2020convergence}. 
Consider an $N$-player symmetric game, i.e., all agents share the same {\em local state} space $\Smc$, {\em action} space $\Amc$, and a reward function that is invariant under the permutation of identities of agents without changing their states and actions.
Let $(s^1,\ldots,s^N)\in \Smc^N$ denote a {\em joint state}, where $s^i\in \Smc$ is the state of the $i$th agent. As $N$ grows large, the game becomes intractable to analyse due to the curse of dimensionality. MFGs achieve tractability by considering the asymptotic limit when $N$ approaches infinity. Formally, take the limit as $N \to \infty$. Due to the homogeneity of agents, MFGs use an empirical distribution $\mu \in \Pmc(\Smc)$, called a {\em mean field}, to represent the statistical information of the joint state: 
\begin{equation*}
	\mu(s) \triangleq \lim_{N \to \infty} \frac{1}{N} \sum_{i=1}^N \mathds{1}_{\{s^i = s\}}.
\end{equation*}
Here, $\Pmc(\Smc)$ represents the set all probability measures over $\Smc$ and $\mathds{1}$ denotes the indicator function, i.e., $\mathds{1}_{x} = 1$ if $x$ is true and $0$ otherwise. The {\em transition function} $P\colon \Smc \times \Amc \times \Pmc(\Smc) \times \Pmc(\Smc) \to [0,1] $ specifies how states evolve, i.e., an agent transits to the next state $s_{t+1}$ with the probability $P(s_{t+1} \vert s_t, a_t, \mu_t)$. 
Let $T\in \Nbb^+$ denote a finite time horizon. A {\em mean field flow} (MF flow for short) thus consists of a sequence of $T + 1$ mean fields $\muv \triangleq \{\mu_t\}_{t=0}^T$. The initial mean field $\mu_0$ is given. 
The {\em running reward} at each step is determined by a bounded {\em reward function} $r\colon \Smc \times \Amc \times \Pmc(\Smc) \rightarrow \Rbb$. Let $\tau \triangleq \{ (s_t, a_t) \}_{t=0}^T$
denote a state-action {\em trajectory} of an agent. We write an agent's long-term reward under a given MF flow $\muv$ as  
$\Rmc(\tau) \triangleq \sum_{t=0}^{T-1} r(s_t,a_t,\mu_t)$.\footnote{Following the convention \citep{yang2018learning,elie2020convergence,chen2022individual}, we set the reward at the last step ($t=T$) as $0$.} 
In summary, an MFG is defined as a tuple $(\Smc, \Amc, P, \mu_0, r)$. 

A time-varying stochastic {\em policy} $\piv \triangleq \{\pi_t\}_{t=0}^T$ 
is adopted to characterise a strategic agent, where each $\pi_t\colon \Smc\rightarrow \Pmc(\Amc)$ is the {\em per-step policy}, i.e., an agent chooses actions following  $a_t\sim \pi_t(\cdot\vert s)$.   
Given an MF flow $\muv$ and a policy $\piv$, an agent's {\em expected return} (cumulative rewards) is written as 
\begin{equation}
	J(\muv, \piv) \triangleq \Ebb_{\tau\sim(\muv, \piv)} \left[ \Rmc(\tau) \right],
\end{equation}
where $s_0 \sim \mu_0, a_t \sim \pi_t(\cdot \vert s_t),s_{t+1} \sim P(\cdot \vert s_t, a_t, \mu_t).$

\subsection{Mean Field Nash Equilibrium}\label{sec:MFNE}

Fixing an MF flow $\muv$, a policy $\piv$ is called a {\em best response} to $\muv$ if it maximises $J(\muv, \piv)$.
We denote the set of all best-response policies to $\muv$ by $\Psi (\muv) \triangleq \argmax_{\piv} J(\muv, \piv)$. However, since all agents optimise their policies simultaneously, the MF flow would shift. Thus, the solution must consider how a policy at the individual level affects the MF flow at the population level. 
Due to the homogeneity of agents, everyone would follow the same policy at optimality. 
The dynamics of the MF flow can thus be governed by the (discrete-time) {\em McKean-Vlasov} (MKV) equation \citep{carmona2013control}:
\begin{equation}\label{eq:MKV}
	\mu_{t+1}(s') = \sum_{s \in \Smc} \mu_t(s) \sum_{a \in \Amc} \pi_t(a \vert s)\; P(s' \vert s, a, \mu_t).
\end{equation}
Given a policy $\piv$, denote $\muv = \Phi(\piv)$ as the MF flow that fulfils MKV equation.  We say $\muv$ is {\em consistent} with $\piv$ if $\muv = \Phi(\piv)$. This consistency guarantees that a single agent's state marginal distribution flow matches the MF flow at the population level. 
The conventional solution concept for MFGs is the {\em mean field Nash equilibrium}. 

\begin{definition}\label{def:MFE}
	A pair $(\muv^{\star}, \piv^{\star})$ is called a {\em mean field Nash equilibrium} (MFNE) if it satisfies:
	\begin{enumerate}
		\item {\bf\em Agent rationality:} $\piv^\star \in \Psi(\muv^\star)$;
		\item {\bf\em Population consistency:} $\muv^{\star} = \Phi(\piv^{\star})$.
	\end{enumerate} 
\end{definition}
Computing an MFNE typically requires a fixed-point iteration procedure for the MF flow \citep{guo2019learning}. Formally, through defining any mapping $\hat{\Psi}: \muv \mapsto \piv$ that identifies a best response in $\Psi(\muv)$, we obtain a fixed point iteration for $\muv$ by alternating between $\piv = \hat{\Psi}(\muv)$ and $\muv = \Phi(\piv)$. The assumption for the uniqueness of MFNE is that the fixed-point iteration will converge to a unique $\muv$ \citep{guo2019learning}. 
However, the fixed-point iteration is not guaranteed to converge to a unique $\muv$, and multiple MFNE can coexist \citep{cui2021approximately}. 

\subsection{Maximum Entropy IRL}\label{sec:MaxEnt}

We next give an overview of MaxEnt IRL \citep{ziebart2008maximum,ziebart2010modeling} in the context of a Markov decision process (MDP) defined by a tuple $(\Smc, \Amc, P, \rho, r)$, where 
$r(s,a)$ is the reward function, and the environment dynamics is determined by the transition function $P(s' \vert s,a)$ and initial state distribution $\rho(s)$. 
In (forward) reinforcement learning (RL), an optimal policy may not exist uniquely. MaxEnt RL solves this ambiguity by augmenting the expected return with a {\em causal entropy} \footnote{Throughout the rest of the paper, we refer to the term entropy as the causal entropy.} \citep{ziebart2010modeling} regularisation term $\Hmc(\pi) \triangleq \mathbb{E}_{\pi} [ - \log \pi(a \vert s) ]$, i.e., the objective is to find a (stationary) policy $\pi^\star$ such that 
\begin{equation*}
\pi^\star = \argmax_{\pi} \mathbb{E}_{\tau \sim \pi} \left[ \sum_{t=0}^{T-1}  r(s_t,a_t) + \beta \Hmc(\pi(\cdot \vert s_t)) \right],	
\end{equation*}
where $\tau$ is a state-action trajectory sampled via $s_0 \sim \rho_0$, $a_t \sim \pi(\cdot \vert s_t)$, $s_{t+1} \sim P(\cdot \vert s_t, a_t)$ and $\beta > 0$ controls the relative importance of reward and entropy. 

Suppose we have no access to the reward function but have a set of observed trajectories sampled from an {\em unknown} expert policy $\pi^E$ obtained by the above MaxEnt RL procedure. 
MaxEnt IRL aims to infer a reward function that rationalises the observed behaviour, which reduces to the following maximum likelihood estimation (MLE) problem (assume $\beta = 1$ \citep{yu2019meta}):
\begin{equation}\label{eq:trajectoryMaxEntIRL}
	p_\omega(\tau) \propto  \rho(s_0) \cdot \prod_{t=0}^{T-1} P(s_{t+1} \vert s_t, a_t)  \cdot e^{R_\omega(\tau)},
\end{equation}
\begin{equation}\label{eq:MaxEntIRL}
	\max_{\omega} \mathbb{E}_{\tau \sim \pi^E} \left[\log p_\omega(\tau)\right] = \mathbb{E}_{\tau \sim \pi^E} \left[ R(\tau) \right] - \log Z_\omega.
\end{equation}
Here, $R_\omega(\tau) \triangleq \sum_{t=0}^{T-1} r_\omega(s_t, a_t)$ where $r_\omega$ is an $\omega$-parameterised reward function, and $Z_\omega \triangleq \sum_{\tau \sim \pi^E} e^{R_\omega(\tau)}$ is the {\em partition function} of the distribution defined in Eq.~\eqref{eq:trajectoryMaxEntIRL}, i.e., a summation over all feasible trajectories. 
Exactly computing $Z_\omega$ is intractable if the state-action space is large. 

{\em Adversarial IRL} (AIRL) was proposed by \citep{fu2018learning} as an efficient sampling-based approximation to MaxEnt IRL, which reframes Eq.~\eqref{eq:MaxEntIRL} as 
optimising a  {\em generative adversarial network} \citep{goodfellow2014generative}. It uses a discriminator $D_\omega$ (a binary classifier) and a {\em adaptive sampler} $\pi_\theta$ (a policy). Particularly, the discriminator takes the following form: 
	$$D_\omega(s,a) = \frac{e^{ f_\omega(s,a)}}{e^{ f_\omega(s,a) + \pi_\theta(a \vert s)}},$$
where $f_\omega$ serves as the parameterised reward function. The update of $D_\omega$ is interleaved with the update of $\pi_\theta$: $D_\omega$ is trained to update the reward function by distinguishing between the trajectories sampled from the expert and the adaptive sampler; while $\pi_\theta$ is trained to maximise $$\Ebb_{\tau\sim\pi_\theta} \left[ \sum_{t=0}^{T-1} \log D_\omega(s_t, a_t) - \log (1 - D_\omega (s_t, a_t)) \right].$$
IRL faces the ambiguity of {\em reward shaping} \citep{ng1999policy}, i.e., multiple reward functions can induce the same optimal policy. To mitigate this ambiguity, \citet{fu2018learning} further restrict the parameterised reward in $D_\omega$ to a specific structure by supplying a state-only {\em potential-based reward shaping} function $h_\phi: \Smc \to \Rbb$:
\begin{equation*}\label{eq:shaping}
	f_{\omega,\phi}(s_t, a_t, s_{t+1}) = r_\omega(s_t, a_t) + h_\phi(s_{t+1}) - h_\phi(s_t).
\end{equation*}
Shown in \cite{fu2018learning}, under certain conditions, $r_\omega(s,a) + h_\phi(s)$ will recover the ground-truth reward function up to a constant.

\subsection{IRL for MFGs}

We adopt the general decentralised IRL setting for MFGs as in \citep{chen2022individual}, which aims to infer the individual reward function from observed individual behaviour. More formally, let $(\Smc, \Amc, P, \mu_0, r)$ be an MFG. Suppose we do not know $r(s,a,\mu)$ but have a set of $M$ observed expert behaviour $\Dmc_E =  \{ \tau_j \}_{j = 1}^M$ sampled from an {\em unknown} equilibrium $(\muv^E, \piv^E)$, where each $\tau = \{(s_t, a_t)\}_{t=0}^{T}$ is an individual agent's state-action trajectory sampled via $s_0 \sim \mu_0$, $a_t \sim \pi_t(\cdot \vert s_t)$, $s_{t+1} \sim P(\cdot \vert s_t, a_t, \mu_t)$. 
IRL for MFG asks for a reward function $r(s,a,\mu)$ under which $(\muv^E, \piv^E)$ constitutes an equilibrium.

\section{Entropy-Regularised MFNE}\label{sec:ERMFNE}

This section introduces and justifies a new solution concept for MFGs, which allows us to characterise the bounded rationality of agents and thereby extend MaxEnt IRL to MFGs. 

\subsection{A New Equilibrium Concept for MFGs}

To extend MaxEnt IRL to MFGs, we need to characterise the trajectory induced by a reward function with a particular distribution as analogous to Eq.~\eqref{eq:trajectoryMaxEntIRL}. 
However, MFNE cannot {\em explicitly} define a tractable trajectory distribution 
as it requires agents never to take suboptimal actions, whereas, in MaxEnt IRL, an agent can take sub-optimal actions with certain low probabilities. 
To bridge the gap between MFGs and MaxEnt IRL, we need a ``soft'' equilibrium concept that can characterise uncertainties in observed behaviour as against MFNE being a ``hard'' equilibrium. To this end, a natural way in game theory is to incorporate policy entropy into rewards \citep{ortega2011information,gabaix2014sparsity},     which enables {\em bounded rationality}, i.e., agents can take satisfactory rather than optimal actions. 
This inspires a new solution concept -- {\em entropy-regularised MFNE} (ERMFNE) -- where an agent aims to maximise the entropy-regularised rewards:
\begin{equation*}
    \tilde{J}(\muv, \piv) \triangleq \mathbb{E}_{\tau\sim(\muv, \piv)} \left[ \sum\limits_{t = 0}^{T-1}  r(s_t, a_t, \mu_t) + \beta \Hmc(\pi_t(\cdot \vert s_t)  \right].
\end{equation*}

\begin{definition}
	A pair of MF flow and policy $(\tilde{\muv}^\star, \tilde{\piv}^\star)$ is called an {\em entropy-regularised MFNE (ERMFNE)} if it satisfies: 
	\begin{enumerate}
		\item {\bf\em Agent bounded rationality:} $\tilde{J} (\tilde{\muv}^{\star}, \tilde{\piv}^{\star}) = \max_{\piv} \tilde{J} (\tilde{\muv}^{\star}, \piv)$;
		\item {\bf\em Population consistency:} $\tilde{\muv}^{\star} = \Phi(\tilde{\piv}^{\star})$.
	\end{enumerate}  
\end{definition}

Despite the entropy-regularised MFGs have been studied in the literature \cite{anahtarci2020q,guo2020entropy,cui2021approximately}, existing work is motivated from a computational perspective, that is, entropy regularisation can improve the stability and the convergence of algorithms for computing an equilibrium. Particularly, \citet{cui2021approximately} independently proposed a similar solution concept and showed that entropy regularisation relaxes the condition for uniqueness as opposed to the unregularised case: {\bf (1)} With entropy regularisation, the best-response policy $\tilde{\piv}$ to an MF flow $\muv$ exists {\em uniquely}; \footnote{Hereafter, we will slightly abuse the term ``best response'' to denote the policy that maximises the entropy-regularised rewards.} {\bf (2)}  ERMFNE exists for any $\beta > 0$ if $r(s,a,\mu)$ and $P(s' \vert s,a,\mu)$ are continuous. Using $\tilde{\piv} = \tilde{\Psi}(\muv)$ to denote the unique best-response policy to $\muv$, we obtain the the fixed point iteration for ERMFNE by alternating between $\piv = \tilde{\Psi}(\muv)$ and $\muv = \Phi(\piv)$. The fixed point iteration converges to a unique MF flow if $\beta$ is large (according to the reward function scale), thereby implying a {\em unique} ERMFNE. 

Note that in ERMFNE, we recover optimality (MFNE) if $\beta=0$.
Although the optimality and uniqueness are approached respectively at two extremes of $\beta$, in this paper, we prioritise the property of uniqueness to ensure the well-definedness of IRL for MFGs, i.e., we assume trajectories are observed from a unique ERMFNE (with a suitable $\beta$) so that the observed behaviour can be interpreted with a unique equilibrium. Since we can adjust the relative importance between rewards and entropy by scaling reward functions, following the convention in MaxEnt IRL \citep{fu2018learning,yu2019multi,yu2019meta} and without loss of generality, we assume $\beta = 1$ in the remainder of the analysis.

\subsection{Trajectory Distributions under ERMFNE}\label{sec:traj_dist} 

Besides uniqueness, in this paper, we take a step further to show that the entropy regularisation endows ERMFNE with the capability of reasoning about uncertainties in observed behaviour in a principled way. Specifically, we show that trajectory distributions induced by ERMFNE can be characterised by an {\em energy-based model}, i.e., trajectories with high expected cumulative rewards are generated with exponentially high probabilities, as is illustrated in Fig.~\ref{fig:trajectory}. It can thus be used for the probabilistic reward inference. 

\begin{theorem}\label{thm:trajectory_ERMFNE}
Let $(\tilde{\muv}^\star, \tilde{\piv}^\star)$ be the ERMFNE for an MFG $(\Smc, \Amc, P,$ $\mu_0, r)$, and $D_{\mathrm{KL}}$ denote the Kullback-Leibler (KL) divergence. Then, $(\tilde{\muv}^\star, \tilde{\piv}^\star)$ is the optimal solution to the following constrained optimisation problem:
	\begin{equation*}
		\min_{\muv, \piv}D_{\mathrm{KL}}\left(p_1(\tau) \parallel p_2(\tau)\right)  \text{ s.t. } \muv = \Phi(\piv)
	\end{equation*} 
	\begin{equation*}
		p_1(\tau) = \mu_0(s_0) \cdot \prod_{t=0}^{T-1} P(s_{t+1} \vert s_t, a_t, \mu_t) \cdot \prod_{t=0}^T \pi_t(a_t \vert s_t)  
	\end{equation*}
	\begin{equation}\label{eq:trajectory_ERMFNE}
			\begin{aligned}
				p_2(\tau) \propto &~ \mu_0(s_0) \cdot \prod_{t=0}^{T-1} P(s_{t+1} \vert s_t, a_t, \mu_t) \cdot e^{\Rmc(\tau)}    
			\end{aligned}
	\end{equation}
\end{theorem}  
\begin{proof}
	See Appendix~\ref{proof:trajectory_ERMFNE}.
\end{proof}

\begin{figure}
	\centering
	\includegraphics[width=.4\textwidth]{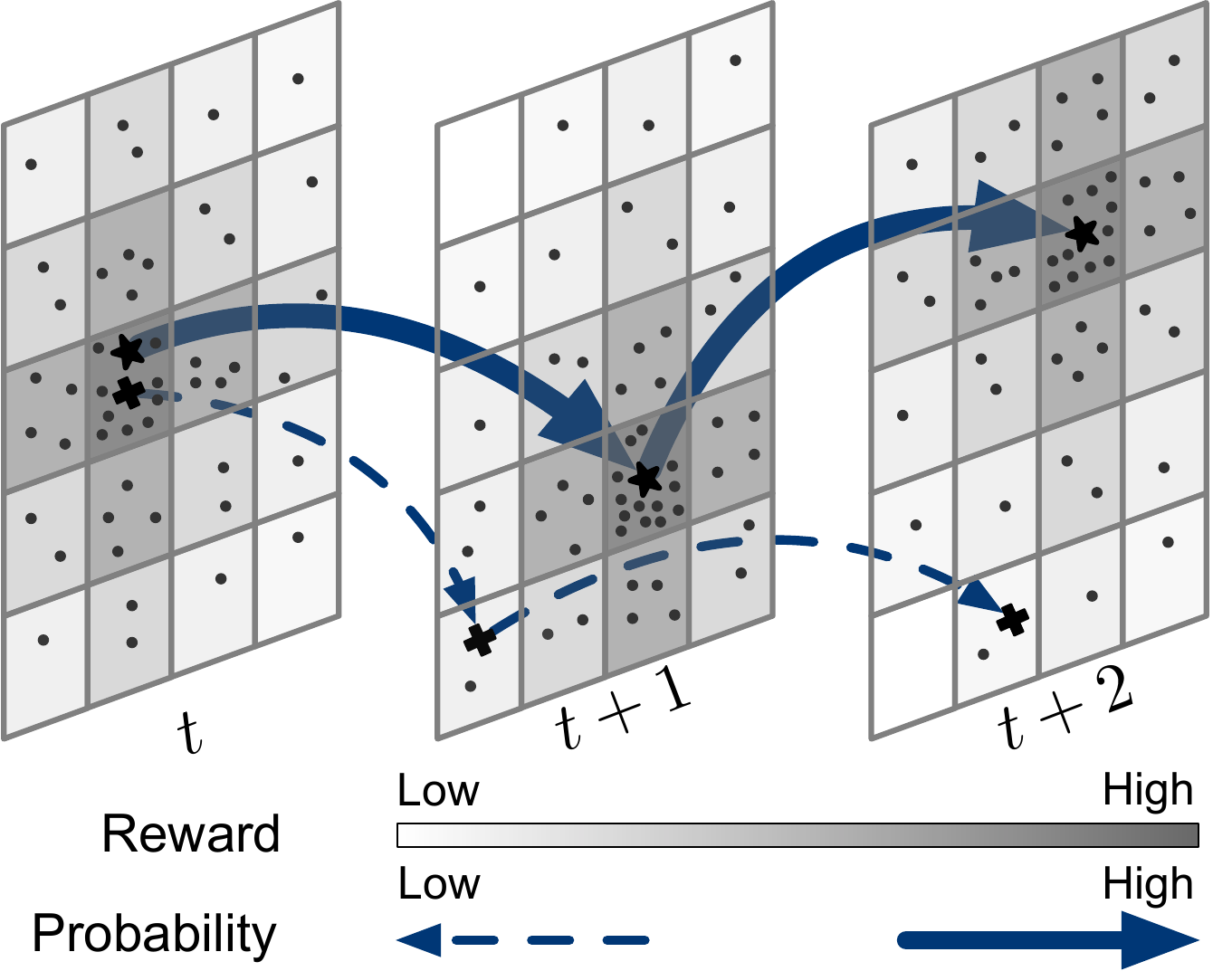}
	\caption{Illustration for the trajectory distribution induced by ERMFNE. A group of agents (dots) drift in a grid world over three time steps. Each grid represents a specific state. The distribution of agents represents the mean field. A darker grey background denotes a higher reward associated with a state. Two individual agents marked with ``$\star$'' and ``$\bm{+}$'' start at the same state. The solid arrows denote the trajectory of $\star$, which can be observed with an exponentially higher probability than the trajectory of $\bm{+}$ depicted using dashed arrows. Note that the rewards of states change over time because of the evolution of mean fields.}
	\label{fig:trajectory}
\end{figure}

\section{Extending MaxEnt IRL to MFGs}\label{sec:extension}

From now on, we assume that observed trajectories are sampled from a unique ERMFNE $(\muv^E, \piv^E)$. 
Let $r_\omega(s,a,\mu)$ be an $\omega$- parameterised reward function and $(\muv^\omega, \piv^\omega)$ denote the ERMFNE induced by $\omega$. 
Then, recovering the underlying reward function reduces to tuning $\omega$. The probability of a trajectory $\tau = \{(s_t, a_t)\}_{t=1}^T$ induced by ERMFNE with $r_\omega$ is defined by the following generative process:
\begin{equation}\label{eq:MLE-ERMFNE}
	p_\omega(\tau) = \mu_0(s_0) \cdot P(s_{t+1} \vert s_t, a_t, \mu_t^\omega) \cdot \prod_{t=0}^T \pi_t^\omega(a_t \vert s_t).
\end{equation}
In the spirit of MaxEnt IRL, we should tune $\omega$ by maximising the likelihood of the observed trajectories concerning the distribution defined in Eq.~\eqref{eq:MLE-ERMFNE}.
By Theorem~\ref{thm:trajectory_ERMFNE}, we can instead optimise the likelihood with respect to the distribution defined in Eq.~\eqref{eq:trajectory_ERMFNE} as a variational approximation:
\begin{equation}\label{eq:MFIRL_original}
\small
\begin{aligned}
	\max_\omega  \Lmc(\omega)
	 \triangleq \Ebb_{\tau \sim (\muv^E,\piv^E)} \Big[ \Rmc_\omega(\tau) + \sum_{t=0}^{T-1}   
		  \log P(s_{t+1} \vert s_t, a_t, \mu^\omega_t)  \Big] - \log \Zmc_\omega,
\end{aligned}
\end{equation}
where $\Zmc_\omega$ is the partition function of the distribution defined in Eq.~\eqref{eq:trajectory_ERMFNE}, i.e., a summation over all feasible trajectories:
\begin{equation}\label{eq:partition_original}
\begin{aligned}
	\Zmc_\omega \triangleq &~\sum_{\tau} e^{R_\omega(\tau)} \prod_{t=0}^{T-1} \log P(s_{t+1} \vert s_t, a_t, \mu^\omega_t).
\end{aligned}
\end{equation}

However, directly optimising the likelihood objective in Eq.~\eqref{eq:MFIRL_original} is intractable because we cannot analytically derive the MF flow $\muv^\omega$. 
This problem arises from the nature of MFGs that the policy and MF flow in ERMFNE are interdependent because $\tilde{\piv}^\star = \tilde{\Psi}(\tilde{\muv}^\star)$ and in turn $\tilde{\muv}^\star = \Phi(\tilde{\piv}^\star)$. 
This issue poses the main challenge for extending MaxEnt IRL to MFGs. 
Worse yet, the transition function $P$ also depends on $\muv^\omega$,  posing an extra layer of complexity as the environment dynamics is generally unknown in practice.

While, notice that if we have access to an ``oracle'' (known a priori) MF flow that determines the shape of the observed MF flow $\muv^E$, an individual would be decoupled from the population. 
Inspired by this fact, we sidestep this problem by substituting $\muv^\omega$ with the empirical value of $\muv^E$, denoted by $\hat{\muv}^E \triangleq \{\mu_t^E\}_{t=0}^T$, estimated from observations $\Dmc_E =  \{ \tau_j \}_{j = 1}^M$ by calculating the occurrence frequencies of states:  
\begin{equation*}
\hat{\mu}_t^E(s) \triangleq \frac{1}{M} \sum_{j=1}^M \mathds{s} \mathds{1}_{ \{ s_{j,t}^i = s \} }.	
\end{equation*}
Since the population consistency condition in ERMFNE guarantees that the state marginal distribution of a single agent matches the mean field at each time step, $\hat{\muv}^E$ achieves an unbiased estimator of $\muv^E$. Meanwhile, by substituting $\hat{\muv}^E$ for $\muv^\omega$, the transition function $P(s_t,a_t,\hat{\mu}_t^E)$ is decoupled the from the reward parameter $\omega$ as $\hat{\muv}^E$ does not depend on $\omega$, and henceforth being omitted in the likelihood function. Finally, with this substitution, we obtain a tractable version of the original MLE objective in Eq.~\eqref{eq:MFIRL_original}:
\begin{equation}\label{eq:final}
	\max_{\omega} \hat{\Lmc}\left(\omega; \hat{\muv}^E \right) \triangleq \Ebb_{\tau \sim \Dmc_E} \left[ \hat{\Rmc}_\omega(\tau) \right]  - \log \hat{\Zmc}_\omega,
\end{equation}
which resembles the formulation of the MLE objective of MaxEnt IRL as given in Eq.~\eqref{eq:MaxEntIRL}. Here, $\hat{\Rmc}_\omega(\tau) \triangleq \sum_{t=0}^{T-1} r_\omega(s_t,a_t,\hat{\mu}_t^E)$ and $\hat{\Zmc}_{\omega} \triangleq \sum_{\tau \in \Dmc_E} e^{\hat{\Rmc}_\omega(\tau)}$ denotes the simplified partition function in Eq.~\eqref{eq:partition_original}.

Statistically, Eq.~\eqref{eq:final} can be interpreted as that we use a likelihood function of a ``mis-specified'' model that treats the policy and MF flow as being independent and replaces the MF flow with its empirical value. In this manner, we estimate the optimal solution to the original MLE problem by maximising a simplified form of the actual likelihood function defined in Eq.~\eqref{eq:MFIRL_original}. Although we sacrifice the accuracy for achieving tractability due to the estimation error of $\hat{\muv}^E$, we show that Eq.~\eqref{eq:final} preserves the property of the asymptotic consistency, 
as $\hat{\muv}^E$ converges almost surely to $\muv^E$ as the number of samples tends to infinity due to the law of large numbers.

\begin{theorem}\label{thm:MFIRL}\label{thm:consistent}
Let the observed trajectories in $\Dmc_E =  \{ \tau_j \}_{j = 1}^M$ be independent and identically distributed and sampled from a unique ERMFNE induced by an unknown parameterised reward function. Suppose for all $s \in \Smc$, $a \in \Amc$ and $\mu \in \Pmc(\Smc)$, $r_\omega(s,a,\mu)$ is differentiable w.r.t. $\omega$. Then, 
	with probability $1$ as the number of samples $M \to \infty$, the equation $\nabla_\omega \hat{\Lmc}\left(\omega; \hat{\muv}^E \right) = 0$ has a root $\hat{\omega}$ such that $\hat{\omega}$ is a maximiser of the likelihood objective $\Lmc(\omega)$ in Eq.~\eqref{eq:MFIRL_original}.
\end{theorem}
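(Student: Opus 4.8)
The plan is to recognise this as an instance of the classical Cramér ``consistent root of the likelihood equation'' theorem, adapted to the fact that $\hat{L}(\cdot;\hat{\flow}^E)$ is the log-likelihood of a \emph{mis-specified} family in which the mean field flow is treated as an exogenous quantity fixed at the empirical value $\hat{\flow}^E$ rather than coupled to $\omega$. First I would make the idealised population objective explicit: for a fixed flow $\flow=\{\mu_t\}_{t=0}^{T-1}$, write $q^{\flow}_\omega(\tau)$ for the normalised energy-based trajectory density of Eq.~\eqref{eq:trajectory_ERMFNE}, so that, up to an additive term independent of $\omega$, $\hat{L}(\omega;\flow)=\tfrac{1}{NM}\sum_{j,i}\log q^{\flow}_\omega(\tau_j^i)$, and by Proposition~\ref{prop:trajectory_ERMFNE} together with homogeneity every expert trajectory has law exactly $q^{\flow^{\omega^*}}_{\omega^*}$. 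Since $\S$, $\A$ and $T$ are finite and $r_\omega$ is differentiable, $\log q^{\flow}_\omega(\tau)$ and $\nabla_\omega\log q^{\flow}_\omega(\tau)$ are bounded and jointly continuous in $(\omega,\flow)$ on any compact neighbourhood $B$ of $\omega^*$ times the simplex product $\PP(\S)^{T}$; combining the a.s.\ convergence $\hat{\flow}^E\to\flow^E=\flow^{\omega^*}$ (law of large numbers) with a uniform law of large numbers over $B$ for the empirical state--action visitations, I would conclude that almost surely $\sup_{\omega\in B}|\hat{L}(\omega;\hat{\flow}^E)-\bar{L}(\omega)|\to 0$ and $\sup_{\omega\in B}\|\nabla_\omega\hat{L}(\omega;\hat{\flow}^E)-\nabla_\omega\bar{L}(\omega)\|\to 0$, where $\bar{L}(\omega)\triangleq\mathbb{E}_{\tau\sim q^{\flow^{\omega^*}}_{\omega^*}}\!\big[\log q^{\flow^{\omega^*}}_\omega(\tau)\big]$ is the expected log-likelihood of the fixed-flow family evaluated at the true flow.

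Next I would verify that $\omega^*$ is a strict local maximiser of $\bar{L}$. Because in $q^{\flow^{\omega^*}}_\omega$ the flow is held fixed at $\flow^{\omega^*}$ and does not vary with $\omega$, the standard information-theoretic identity applies verbatim: $\bar{L}(\omega^*)-\bar{L}(\omega)=\KL\!\big(q^{\flow^{\omega^*}}_{\omega^*}\,\big\|\,q^{\flow^{\omega^*}}_{\omega}\big)\ge 0$, so $\omega^*$ maximises $\bar{L}$; differentiating under the integral gives $\nabla_\omega\bar{L}(\omega^*)=\mathbb{E}_{q^{\flow^{\omega^*}}_{\omega^*}}\!\big[\nabla_\omega\log q^{\flow^{\omega^*}}_{\omega}(\tau)\big]\big|_{\omega=\omega^*}=\nabla_\omega\!\int q^{\flow^{\omega^*}}_\omega(\tau)\,d\tau\,\big|_{\omega=\omega^*}=0$; and under a standard local identifiability assumption (ensured e.g.\ by non-singularity of the fixed-flow Fisher information at $\omega^*$) the maximiser is strict on a small sphere $\partial B_\varepsilon(\omega^*)$. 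It is essential here that $\hat{\flow}^E$ be a consistent estimate of the \emph{true} flow $\flow^{\omega^*}$: had the fixed flow converged to anything else, $\omega^*$ would generally cease to be a critical point of the corresponding limit and the estimator would be inconsistent --- this is exactly where the mis-specification of decoupling the flow becomes asymptotically harmless.

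Then the Cramér argument finishes the proof. Fix $\varepsilon>0$ small enough that $\bar{L}<\bar{L}(\omega^*)$ on $\partial B_\varepsilon(\omega^*)$. On the probability-one event from the first step on which the uniform convergence holds, for all large $M$ the continuous map $\omega\mapsto\hat{L}(\omega;\hat{\flow}^E)$ is strictly larger at $\omega^*$ than anywhere on $\partial B_\varepsilon(\omega^*)$, hence attains a local maximum at an interior point $\hat{\omega}_M\in B_\varepsilon(\omega^*)$; since $\hat{L}(\cdot;\hat{\flow}^E)$ is differentiable in $\omega$, this $\hat{\omega}_M$ solves $\nabla_\omega\hat{L}(\omega;\hat{\flow}^E)=0$. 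Because $\varepsilon$ is arbitrary, this yields, almost surely, a sequence of roots of the likelihood equation with $\hat{\omega}_M\to\omega^*$, i.e.\ the claimed asymptotic consistency.

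The step I expect to be the main obstacle is the uniform convergence in the first step: one has to control \emph{simultaneously} the sampling randomness in the observed trajectories and the randomness in the plugged-in flow $\hat{\flow}^E$, and argue that fixing the flow is asymptotically innocuous precisely because $\hat{\flow}^E$ converges to the \emph{true} flow $\flow^{\omega^*}$, at which $q^{\flow^{\omega^*}}_{\cdot}$ is correctly specified (Proposition~\ref{prop:trajectory_ERMFNE}). This is where joint continuity of $(\omega,\flow)\mapsto\log q^{\flow}_\omega(\tau)$ together with a uniform law of large numbers is used; finiteness of $\S$, $\A$, $T$ and $\gamma\in(0,1)$ make the boundedness and the differentiation-under-the-integral interchanges automatic, modulo mild regularity left implicit in the statement (a $\mu$-independent transition support so the $\log p$ terms stay finite, continuity of $r_\omega$ in $\mu$, and non-singular fixed-flow Fisher information at $\omega^*$). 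The remaining steps are then essentially textbook.
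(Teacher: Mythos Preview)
Your argument is a correct instance of the Cram\'er consistency theorem and would go through under the extra regularity you list (continuity of $r_\omega$ in $\mu$, local identifiability / non-singular fixed-flow Fisher information). It is, however, a genuinely different route from the paper's proof, and in fact proves a somewhat different statement than what is literally asserted.

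The paper's proof is a short direct computation. It writes out $\nabla_\omega\hat{L}(\omega;\hat{\flow}^E)$ explicitly, regroups it as
\[
\sum_{j,i}\Big(\Pr\nolimits_{\D}(\tau_j^i)-\tfrac{\exp(\sum_t\gamma^t r_\omega(s^i_{j,t},a^i_{j,t},\hat{\mu}^E_t))}{\hat{Z}_\omega}\Big)\sum_t\gamma^t\nabla_\omega r_\omega(s^i_{j,t},a^i_{j,t},\hat{\mu}^E_t),
\]
and then argues that at $\omega=\omega^*$ and in the limit $M\to\infty$ the bracketed term vanishes, because the empirical trajectory frequencies converge to $\Pr(\tau)$, and (since $\hat{\flow}^E\to\flow^{\omega^*}$) the softmax weights at $\omega^*$ also converge to $\Pr(\tau)$ by Proposition~\ref{prop:trajectory_ERMFNE}. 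In other words, the paper simply verifies that $\omega^*$ itself satisfies the first-order condition asymptotically, which is exactly the literal content of the theorem (``has a root $\hat{\omega}$ such that $\hat{\omega}=\omega^*$'').

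The trade-off is this: your Cram\'er argument needs identifiability (a \emph{strict} local maximum at $\omega^*$) that is neither stated in the theorem nor used by the paper, but in exchange it yields the stronger and more standard consistency conclusion that for every $\varepsilon>0$ there is, for all large $M$, a root $\hat{\omega}_M$ with $\|\hat{\omega}_M-\omega^*\|<\varepsilon$. The paper's computation avoids any identifiability hypothesis but only establishes that $\omega^*$ is an asymptotic root; it says nothing about the existence of nearby roots for finite $M$, nor about uniqueness. Your uniform-convergence step also makes explicit the joint control in $(\omega,\flow)$ that the paper leaves implicit when it passes to the limit in both $\Pr_{\D}$ and $\hat{\flow}^E$ simultaneously.
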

\begin{proof}
	The gradient of $\hat{\Lmc}$ concerning $\omega$ is given by:
	\begin{equation}\label{eq:derivative}
	\begin{aligned}
		\nabla_\omega \hat{\Lmc}\left(\omega; \hat{\muv}^E \right)
		 = & \frac{1}{M} \sum_{j=1}^M     \nabla_\omega \hat{\Rmc}_\omega(\tau_j)   - \nabla_\omega \log \hat{\Zmc}_\omega\\
		 = & \frac{1}{M} \sum_{j=1}^M   \nabla_\omega \hat{\Rmc}_\omega(\tau_j)   -  \frac{1}{\hat{\Zmc}_\omega} \nabla_\omega \hat{\Zmc}_\omega\\
		= &  \frac{1}{M} \sum_{j=1}^M  \nabla_\omega \hat{\Rmc}_\omega(\tau_j) - \sum_{j=1}^M \frac{e^{\hat{\Rmc}_\omega(\tau_j)}}{\hat{\Zmc}_\omega}   \nabla_\omega \hat{\Rmc}_\omega(\tau_j). 
		\end{aligned}
	\end{equation}
	
Let $\Pr\nolimits_{\Dmc_E}(\tau) \triangleq \frac{1}{M} \sum_{j=1}^M  \mathds{1}_{\{\tau_j = \tau\}}$ denote the empirical trajectory distribution, then Eq.~\eqref{eq:derivative} is equivalent to 
\begin{equation}\label{eq:derivative2}
\begin{aligned}
	 & \nabla_\omega \hat{\Lmc}\left(\omega; \hat{\muv}^E \right)  =    \sum_{j=1}^M \Pr\nolimits_{\Dmc_E}(\tau_j) \nabla_\omega \hat{\Rmc}_\omega(\tau_j) - 
	  \sum_{j=1}^M  \frac{e^{\hat{\Rmc}_\omega(\tau_j)}}{\hat{\Zmc}_\omega}  \nabla_\omega \hat{\Rmc}_\omega(\tau_j) \\
\end{aligned}
\end{equation}
\begin{equation*}
	\qquad\;\; =  \sum_{j=1}^M \left( \Pr\nolimits_{\Dmc_E}(\tau_j) - \frac{e^{\hat{\Rmc}_\omega(\tau_j)}}{\hat{\Zmc}_\omega} \right)   \nabla_\omega \hat{\Rmc}_\omega(\tau_j).
\end{equation*}

When the number of samples $M \to \infty$, $\Pr\nolimits_{\Dmc_E}(\tau)$ tends to the true trajectory distribution $p(\tau)$ (see Eq.~\eqref{eq:trajectory_ERMFNE}) induced by ERMFNE. 
Meanwhile, according to the law of large numbers, $\hat{\muv} \to \muv^E$ with probability one as the number of samples $M \to \infty$. Let $\omega^\star$ be a maximiser of the likelihood objective in Eq.~\eqref{eq:MFIRL_original}. Taking the limit as $M \to \infty$ and the optimality $\omega = \omega^\star$, we have:
\begin{equation*}
\begin{aligned}
	\frac{e^{\Rmc_{\omega^\star}(\tau_j)}}{\hat{\Zmc}_{\omega^\star}}  & = \frac{e^{\Rmc_{\omega^\star}(\tau_j)}}{\sum_{j=1}^M e^{\Rmc_{\omega^\star}(\tau_j) }}
	 = \Pr(\tau_j)
	 = \Pr\nolimits_{\Dmc_E}(\tau_j).
\end{aligned}
\end{equation*}
Therefore, the gradient in Eq.~\eqref{eq:derivative2} will be zero. 
\end{proof}

\section{Mean Field Adversarial IRL}\label{sec:MF-AIRL}
Theorem~\ref{thm:MFIRL} bridges the gap between optimising the original intractable MLE objective in Eq.~\eqref{eq:MFIRL_original} and the tractable empirical MLE objective in Eq.~\eqref{eq:final}. However, as mentioned in Sec.~\ref{sec:MaxEnt}, exactly computing the partition function $\hat{Z}_\omega$ is generally difficult. Similar to AIRL \citep{fu2018learning}, 
we adopt {\em importance sampling} to estimate $\hat{Z}_{\omega}$ with {\em adaptive samplers}. Since policies are time-varying in MFGs, we use a set of $T$ adaptive samplers $\piv^{\thetav} \triangleq (\pi^{\theta_0}, \pi^{\theta_1}, \ldots, \pi^{\theta_{T-1}})$, where each $\pi^{\theta_t}$ serves as the parameterised per-step policy.

Now, we are ready to present to our {\em Mean-Field Adversarial IRL} (MF-AIRL) framework, which trains a discriminator $$\hat{D}_\omega (s_t, a_t) \triangleq \frac{e^{f_\omega(s_t, a_t, \hat{\mu}^E_t)}}{e^{f_\omega(s_t, a_t, \hat{\mu}^E_t) + \pi^{\theta_t}(a_t \vert s_t)}}$$ as
\begin{equation}\label{eq:MFAIRL-discriminator}
\begin{aligned}
	\max_\omega~ &\Ebb_{\tau \sim \Dmc_E} \left[ \sum_{t=0}^{T-1} \log \hat{D}_\omega (s_t, a_t) \right] + \Ebb_{\tau \sim \piv^{\thetav}} \left[ \sum_{t=0}^{T-1} \log (1 - \hat{D}_\omega (s_t, a_t)) \right],
\end{aligned}
\end{equation}
and trains adaptive importance samplers $\piv^{\thetav}$ as
\begin{equation}\label{eq:MFAIRL-samplers}
\begin{aligned}
	\max_{\thetav} &\Ebb_{\tau \sim \piv^{\thetav}} \left[ \sum_{t=0}^{T-1} \log \hat{D}_\omega (s_t, a_t) - \log (1 - \hat{D}_\omega (s_t, a_t)) \right] \\
	= &\Ebb_{\tau \sim \piv^{\thetav}} \left[ \sum_{t=0}^{T-1} f_\omega(s_t, a_t, \hat{\mu}^E_t) - \log \pi^{\theta_t} (a_t \vert s_t) \right].
\end{aligned}
\end{equation}

The update of policy parameters $\thetav$ is interleaved with the update of the reward parameter $\omega$. Intuitively, tuning $\piv^{\thetav}$ can be viewed as a {\em policy optimisation} procedure, which is to find the ERMFNE policy induced by the current reward parameter in order to minimise the variance of importance sampling; $f_\omega$ is trained to estimate the reward function by distinguishing between the observed trajectories and those generated by the current adaptive samplers $\piv^{\thetav}$. We can use {\em backward induction} to train $\piv^{\thetav}$, i.e., tuning $\pi^{\theta_t}$ based on $\pi^{\theta_{t+1}}, \ldots, \pi^{\theta_{T-1}}$ that are already tuned. \footnote{Since the reward at $t = T$ is 0, $\pi_T$ always selects actions with ties broken arbitrarily to maximise the policy entropy.} At optimality, $f_\omega$ will approximate the underlying reward function for the observed ERMFNE and $\piv^{\thetav}$ will approximate the observed policy.

\subsection{Reward Shaping in MFGs}\label{sec:shaping}
As mentioned in Sec.~\ref{sec:MaxEnt}, IRL faces reward ambiguity. This issue is called the effect of {\em reward shaping} \citep{ng1999policy}, i.e., there is a class of reward transformations that induce the same set of optimal policies, where IRL cannot identify the ground-truth one without prior knowledge of environments. 
It is shown that for any state-only {\em potential function} $h: \Smc \to \mathbb{R}$, the reward transformation 
$$r'(s_t, a_t, s_{t+1}) = r(s_t, a_t) + h(s_{t+1}) - h(s_t)$$ 
is the sufficient and necessary condition to ensure policy invariance for both MDPs and stochastic games \citep{devlin2011theoretical}. We show that a similar idea can be extended to MFGs: for any potential function $g: \Smc \times \Pmc(\Smc) \to \Rbb$, the potential-based reward shaping can ensure the invariance of both ERMFNE and MFNE. The detailed justification and proofs are deferred until Appendix~\ref{app:shaping}.

To mitigate the effect of reward shaping, similar to AIRL \citep{fu2018learning}, we assume that the parameterised reward function $f_\omega$ is in the following structure:
\begin{equation*}
\begin{aligned}
	f_{\omega, \phi}(s_t, a_t, \mu_t, s_{t+1}, \mu_{t+1}) = ~& r_\omega(s_t, a_t, \mu_t) +\\  & g_\phi(s_{t+1}, \mu_{t+1}) - g_\phi(s_t, \mu_t).
\end{aligned}
\end{equation*}
Here, $g_\phi$ is the $\phi$-parameterised potential function for MFGs. To summarise, we present the pseudocode in Alg.~\ref{alg:MFIRL}.

\begin{algorithm}[htp]
   \caption{Mean-Field Adversarial IRL}\label{alg:MFIRL}
\begin{algorithmic}[1]
   \STATE {\bf Input:} MFG with parameters $(\Smc, \Amc, P, \mu_0)$ and observed trajectories $\Dmc_E = \{ \tau_j \}_{j = 1}^M$.
   \STATE {\bf Initialisation:} reward parameter $\omega$, adaptive samplers $\thetav$ and potential function parameter $\phi$.
   \STATE Estimate the empirical expert MF flow $\hat{\muv}^E$ from $\Dmc_E$. 
   \FOR{each iteration}
   		\STATE Sample a set of trajectories $\Dmc_{\piv} = \{\tau_j\}$ from $\piv^\thetav$ via $s_0 \sim \mu_0$, $a_t \sim \pi^{\theta_t}(\cdot \vert s_t)$, $s_{t+1} \sim P(\cdot \vert s_t, a_t, \mu_t)$.
   		\STATE Sample subsets $\Xmc_E,\Xmc_{\piv}$ from $\Dmc_E,\Dmc_{\piv}$.
   		\STATE Update $\omega, \phi$ using $\Xmc_E,\Xmc_{\piv}$ according to Eq.~\eqref{eq:MFAIRL-discriminator}.
   		\FOR{$t=T-1, \ldots, 0$}
   			\STATE Update $\theta_t$ with respect to the reward estimate $r_\omega(s,a,\mu) + g_\phi(s,\mu)$ according to Eq.~\eqref{eq:MFAIRL-samplers}.
   		\ENDFOR 
   \ENDFOR
   \STATE {\bfseries Output:} Learned reward function $r_{\omega}$.
\end{algorithmic}
\end{algorithm}

\section{Related Work}\label{sec:related}
We continue from the introduction to relate our work to the existing literature.
MFGs were pioneered 
by \citep{lasry2007mean,huang2006large} in the continuous setting of stochastic differential games. 
The discrete MFG model was then proposed in \citep{gomes2010discrete}, which was adopted in the learning setting. 
Recently, learning MFG has attracted significant attention \citep{cardaliaguet2017learning}, and most methods are based on reinforcement learning \citep{yang2018mean,guo2019learning,subramanian2019reinforcement,cui2021approximately}, {\em fictitious play} \citep{cardaliaguet2017learning,elie2020convergence,xie2021learning}, or a combination of the two \citep{perrin2022generalization}.    
While these methods require a well-designed reward function that is challenging to hand-tune in practice. In contrast, our method recovers a reward function from the observed behaviour.

IRL was introduced by \citep{ng2000algorithms} in the single-agent setting. Early IRL methods were based on {\em margin optimisation} \citep{ratliff2006maximum}, which makes IRL ill-defined. {\em Maximum entropy} (MaxEnt) {\em IRL} \citep{ziebart2008maximum,ziebart2010modeling} addresses this issue by providing a probabilistic approach to find a most non-committal reward function whose induced state-action trajectory distribution has the MaxEnt among those matching the reward expectation of the observed behaviour. 
However, it is only suitable for small and discrete problems since MaxEnt IRL requires iteratively solving an RL problem while tuning a reward function. 
{\em Adversarial IRL} (AIRL) \citep{fu2018learning} was later proposed, which scales MaxEnt IRL to high-dimensional or continuous domains. It implements a sampling-based approximation to MaxEnt IRL by drawing an analogy between {\em generative adversarial networks} \citep{goodfellow2014generative} and MaxEnt IRL, thereby being able to partially solve each RL problem associated with reward tuning. 

Another line of work extends ILR to the multi-agent setting, where the problem is cast to finding individual reward functions of stochastic games. They typically take a specific equilibrium concept, such as the conventional Nash equilibrium \citep{fu2021evaluating}, logistic stochastic best response equilibrium \citep{yu2019multi} and equilibrium for the cooperative setting \citep{natarajan2010multi}, and assume the equilibrium exists uniquely in order to guarantee the well-definedness (as we assumed in this paper). However, these methods scale poorly to large-scale scenarios due to the exponential growth of state-action spaces and agent interactions. By extending MaxEnt IRL to MFGs, our method realises an effective IRL framework for large-scale scenarios. 

\begin{figure*}
	\centering
	\includegraphics[width= \textwidth]{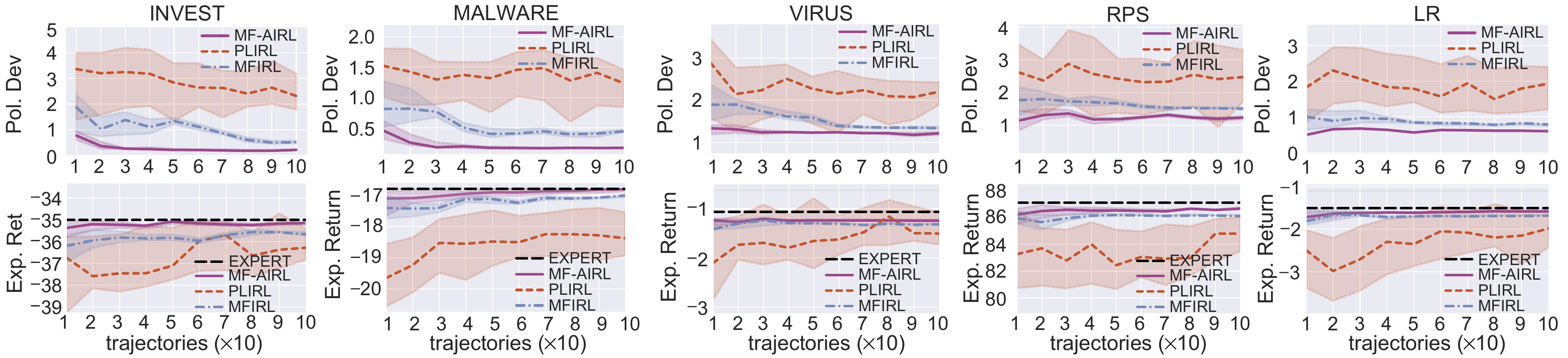}
	\caption{Results for numerical tasks. The line and shade are the median and variance over 10 independent runs.}\label{fig:numerical}
\end{figure*}

\section{Experiments}\label{sec:experiments}
We seek to answer the following fundamental question via experiments: {\em Can MF-AIRL effectively and efficiently recover a suitable reward function of an MFG by observing bounded rational behaviour?} To that end, we evaluate MF-AIRL on a series of simulated tasks motivated by real-world applications, where the observed behaviour is sampled from an ERMFNE.

\subsection{Experimental Setup}

\subsubsection{Tasks.} We adopt five MFG tasks: {\em investment in  product quality} ({\tt INVEST} for short), {\em malware spread} ({\tt MALWARE}), {\em virus infection} ({\tt VIRUS}), {\em Rock-Paper-Scissors} ({\tt RPS}) and {\em Left-Right} ({\tt LR}), which simulate a series of large-scale multi-agent scenarios in the contexts of marketing, virus propagation modelling and norm emergence. These tasks were originally studied in \citep{weintraub2010computational,huang2016mean,huang2017mean,subramanian2019reinforcement,cui2021approximately} and adapted by \citep{chen2022individual}. 
Detailed descriptions and settings are	 in Appendix~\ref{app:task}.

\subsubsection{Baselines.} We compare MF-AIRL against the two IRL methods above for MFGs: 
{\bf (1)} 	The centralised method \citep{yang2018learning} relies on the reduction from MFG to MDP. Since it aims to recover the population's average rewards, we call it {\em population-level IRL} ({\bf PLIRL}). Shown by \citep{chen2022individual}, PLIRL is only compatible with socially optimal equilibria that maximise the population's average rewards; otherwise, it can result in biased reward inference.
{\bf (2)} The decentralised method, {\em Mean Field IRL} ({\bf MFIRL}) \citep{chen2022individual}, is based on margin optimisation, i.e., finding a reward function by minimising the margin (in terms of expected return) between the observed equilibrium and every other equilibrium; it is able to recover the ground-truth reward function with no bias, regardless of whether the observed equilibrium is socially optimal or not. 

\subsubsection{Performance Metrics.} The quality of a learned reward function $r_\omega$ can be evaluated by the difference between its induced best-response policy to $\muv^E$, denoted by $\piv^\omega$, and $\piv^E$, because a best-response policy is unique under the entropy regularisation.  We adopt the following two metrics that measure the difference between $\piv^\omega$ and $\piv^E$ reflected in the statistical distance and the expected return, respectively: 

\begin{enumerate}
	\item {\em Policy Deviation} (Pol. Dev). We use the cumulative KL- divergence,  $$\sum_{t = 0}^{T-1} \sum_{s \in \Smc} D_{\KL} \big( \pi^E_t (\cdot \vert s) \parallel \pi^\omega_t (\cdot \vert s) \big),$$ to measure the statistical distance between two policies.
    \item {\em Expected return} (Exp. Ret). The difference between two expected returns $J(\muv^E, \piv^\omega)$ and $J(\muv^E, \piv^E)$ under the ground-truth reward function. 
\end{enumerate}

\vspace{-1em}
\subsubsection{Training Procedures.} In all tasks, we have access to the ground-truth reward functions and environment dynamics, which allows us to numerically compute an ERMFNE through the fixed point iteration as introduced in Sec.~\ref{sec:ERMFNE}. Unless specified otherwise, we set the entropy regularisation coefficient $\beta=1$. After obtaining the ERMFNE, we sample trajectories from them, each with a length of 50 time steps, the same as the number used in \citep{song2018multi,yu2019multi,chen2022individual}. We use one-hot encoding to represent states and actions. All three algorithms share the same neural network architecture as the reward model: two hidden layers of 64 leaky rectified linear units each.
Implementation details are given in Appendix~\ref{app:exp}.

\begin{table}
\begin{threeparttable}
    \caption{Results for new environment dynamics. Mean and variance are taken across 10 independent runs.}
    \label{tab:new}
    \centering 
    \small
    \begin{tabular}{ccrrr}
        \toprule
        \multirow{3}{*}{Task} & \multirow{3}{*}{Metric} & \multicolumn{3}{c}{Algorithm}\\
        \cmidrule(r){3-5}
        & &   MF-AIRL & PLIRL & MFIRL \\
        \midrule
        \multirow{2}{*}{{\tt INVEST}}
        & Pol. Dev  &  \textbf{0.24} (0.02) & 1.06 (0.21) & 0.78 (0.18)\\
        & Exp. Ret  &   \textbf{-35.19} (0.51) & -37.73 (2.76) & -35.92 (0.98)\\
        \midrule
        \multirow{2}{*}{{\tt MALWARE}} 
        & Pol. Dev &  \textbf{0.52} (0.01) & 1.54 (1.20) & 0.73 (0.14)\\
        & Exp. Ret &  \textbf{-18.49} (0.14) & -19.59 (0.29) & -18.82 (0.05) \\
        \midrule
        \multirow{2}{*}{{\tt VIRUS}} 
        & Pol. Dev &  \textbf{1.48} (0.01) & 1.76 (0.18) & 1.55 (0.03)\\
        & Exp. Ret &  \textbf{-1.71} (0.02) & -2.66 (0.14) & -2.16 (0.06)\\
        \midrule
        \multirow{2}{*}{{\tt RPS}} 
        & Pol. Dev & \textbf{6.11} (0.46) & 6.47 (0.98) & 6.56 (0.82)\\
        & Exp. Ret & \textbf{93.36} (2.51) & 91.99 (0.44) & 91.28 (2.15)\\
        \midrule
        \multirow{2}{*}{{\tt LR}} 
        & Pol. Dev &  \textbf{0.57} (0.04) & 0.62 (0.22) & 0.71 (0.07)\\
        & Exp. Ret &  \textbf{-1.70} (0.01) & -2.67 (1.01) & -1.93 (0.06)\\
        \bottomrule
    \end{tabular}
     \begin{tablenotes}
      \small
      \item {\em Note:} The Exp. Ret of the observed behaviour for five tasks are -35.87, -18.90, -1.24, 93.16 and -0.64, respectively.
    \end{tablenotes}
  \end{threeparttable}
\end{table}

\subsection{Reward Recovery with Fixed  Dynamics} The first experiment tests the capability of reward recovery with fixed environment dynamics. Results are depicted in Fig.~\ref{fig:numerical}. On all tasks, MF-AIRL achieves the closest performance to the observed behaviour with the same number of trajectories and the fastest convergence with the number of trajectories increases, suggesting that among all three algorithms, MF-AIRL is the most effective and efficient for bounded-rational agents. MFIRL shows larger deviations even if the number of trajectories is large. This may be because MFIRL takes MFNE as the solution concept, thereby lacking the ability to tolerate suboptimal behaviours. PLIRL shows the largest deviation and variance. This is as expected because PLIRL is only suitable for socially optimal equilibria, while an ERMFNE is not socially optimal as it captures bounded rationality. Therefore, biased reward inferences occur when applying PLIRL to these tasks.

\subsection{Policy Transfer across Varying Dynamics}
The second experiment investigates the robustness against changing environment dynamics. We change the transition function (see Appendix~\ref{app:task} for details), recompute an ERMFNE $(\muv^E_{new},\piv^E_{new})$ induced by the ground-truth reward function, compute the best-response policy $\piv^\omega_{new}$ to $\muv^E_{new}$ under the learned reward function (trained with 100 demonstrated trajectories), and calculate two metrics again. Results are summarised in Tab.~\ref{tab:new}. Consistently, MF-AIRL outperforms two baselines on all tasks. We attribute the high robustness of MF-AIRL to the following reasons: (1) MF-AIRL uses a potential function to mitigate the effect of reward shaping while two baselines do not; (2) The issue of biased inference in PLIRL can be exacerbated by the changing dynamics, as is argued in \cite{chen2022individual}. To summarise, MF-AIRL can recover ground-truth reward functions with high robustness to changing dynamics.

\begin{table}
\begin{threeparttable}
\caption{Comparisons between MF-AIRL and MFIRL on varying entropy regularisation strength $\beta$.}
    \label{tab:beta}
    \centering 
    \small
    \begin{tabular}{clcrrr}
        \toprule
        \multirow{3}{*}{Task} & \multirow{3}{*}{$\beta$}& \multirow{3}{*}{Metric} & \multicolumn{3}{c}{Algorithm}\\
        \cmidrule(r){4-6}
        & & & OBSERVED & MF-AIRL & MFIRL \\
        \midrule
        \multirow{5}{*}{{\tt INVEST}} &
        \multirow{2}{*}{$0$}
        &  Pol. Dev  & --  & 0.45 (0.03) & 0.44 (\textbf{0.02})\\
        & & Exp. Ret  & -35.05  & -35.92  (\textbf{0.68}) & -35.54 (2.55)\\
        \cmidrule(r){2-6}
        & \multirow{2}{*}{$0.1$}
        &  Pol. Dev  & --& \textbf{0.31} (0.02)  & 0.39 (0.04)\\
        & & Exp. Ret  & -36.37  & \textbf{-37.08} (0.71) & -37.40 (1.08)\\
        \midrule
        \multirow{5}{*}{{\tt MALWARE}} &
        \multirow{2}{*}{$0$}
        &  Pol. Dev  & --  & 0.39 (0.07) & 0.38 (0.07)\\
        & & Exp. Ret  & -18.06 &  -19.15 (\textbf{0.25}) & -18.52 (0.51)\\
        \cmidrule(r){2-6}
        & \multirow{2}{*}{$0.1$}
        &  Pol. Dev  & -- & \textbf{0.41} (0.03)  & 0.50 (0.10)\\
        & & Exp. Ret  & -19.36 & \textbf{-19.84} (0.22)  & -20.39 (0.69)\\
        \midrule
        \multirow{5}{*}{{\tt VIRUS}} &
        \multirow{2}{*}{$0$}
        &  Pol. Dev  & -- & 1.50 (\textbf{0.04})  & 1.34 (0.09)\\
        & & Exp. Ret  & -1.17  & -2.55 (\textbf{0.04}) & -1.61 (0.17)\\
        \cmidrule(r){2-6}
        & \multirow{2}{*}{$0.1$}
        &  Pol. Dev  & -- & \textbf{1.54}  (0.01)  & 1.80 (0.07)\\
        & & Exp. Ret  & -2.15  & \textbf{-2.61} (0.06) & -2.98 (0.43)\\
        \midrule
        \multirow{5}{*}{{\tt RPS}} &
        \multirow{2}{*}{$0$}
        &  Pol. Dev  & -- & 9.71 (\textbf{0.24})  & 9.36 (0.40)\\
        & & Exp. Ret  & 94.27  & 93.21 (\textbf{0.49}) & 93.58 (2.51)\\
        \cmidrule(r){2-6}
        & \multirow{2}{*}{$0.1$}
        &  Pol. Dev  & -- & \textbf{7.09} (0.54)  & 8.40 (0.40)\\
        & & Exp. Ret  & 91.43  & \textbf{90.43} (3.09) & 89.40 (0.96)\\
        \midrule
        \multirow{5}{*}{{\tt LR}} &
        \multirow{2}{*}{$0$}
        &  Pol. Dev  & -- & 0.45 (\textbf{0.07}) & 0.37 (0.08)\\
        & & Exp. Ret  & -0.52  & -2.60 (\textbf{0.08}) & -1.70 (1.08)\\
        \cmidrule(r){2-6}
        & \multirow{2}{*}{$0.1$}
        &  Pol. Dev  & -- & \textbf{0.68} (0.04)  & 0.70 (0.04)\\
        & & Exp. Ret  & -0.64  & \textbf{-0.81} (0.04) & -0.96 (0.71)\\
        \bottomrule
    \end{tabular}
\end{threeparttable}
\end{table}

\subsection{Weak Entropy Regularisation}
Suppose the entropy regularisation in ERMFNE is too strong. In that case, it becomes easy and trivial to perform MaxEnt IRL as the policy tends to select actions uniformly due to the maximum entropy principle. Our third experiment thus investigates the performance under weak entropy regularisation. Since PLIRL is known to lead to biased reward inference if the demonstrated equilibrium is not socially optimal, we eliminate it here and only compare two decentralised methods. To weaken the entropy regularisation, we set the coefficient $\beta$ in the observed ERMFNE to $0$ and $0.1$, respectively and sample 100 demonstrated trajectories from each. The environment dynamic is fixed. Results are summarised in Tab.~\ref{tab:beta}. Note that an ERMFNE is recovered to an MFNE if $\beta=0$; this complements the above two experiments where all trajectories are sampled from an ERMFNE with $\beta =1$. It also enables a fair comparison between MF-AIRL and MFIRL as, technically, they are designed under two prescribed equilibrium concepts.

With trajectories sampled from an MFNE ($\beta=0$), MF-AIRL shows a more significant deviation from the observed behaviour than MFIRL, but its variance is lower on average. This fact can be attributed to the reason that a best-response policy in MFNE does not exist uniquely, though MFIRL is unbiased under  MFNE. In contrast, MF-AIRL can always recover a unique best-response policy from a learned reward function, though with bias under MFNE. This result again validates our argument that by taking MFNE as the solution concept, MFIRL fails to elicit a unique policy from the learned reward function.
While, even with a positive yet small strength of entropy regularisation   ($\beta = 0.1$), our MF-AIRL quickly outperforms MFIRL in terms of both accuracy and variance. This suggests that our MF-AIRL can handle imperfect behaviours resulting from bounded rationality, even with minor uncertainties.

\section{Concluding Remarks}

In this paper, we propose MF-AIRL, the first probabilistic IRL framework effective for MFGs with bounded-rational agents. We first extend MaxEnt IRL to MFGs based on the solution concept termed ERMFNE, which allows us to characterise uncertainties in observed behaviour using the maximum entropy principle. We then develop the practical MF-AIRL framework using an adversarial learning approach to solve  MaxEnt IRL for MFGs efficiently. Experimental results on simulated tasks demonstrate the effectiveness and efficiency of MF-AIRL against existing IRL methods for MFGs.

We argue that MF-AIRL is worth following generalisations: 
	{\bf (1)} {\em Continuous states and actions}. 
	The arguments in this paper still hold for continuous state-action spaces, except that some techniques (e.g., $\epsilon$-net \citep{guo2019learning}) are needed to
discretise a mean field because it turns to a probability density function if states are continuous. 
	{\bf (2)} {\em Infinite time horizon}. When the time horizon tends to infinity, the mean field is shown to converge almost surely to a constant limit, resulting in the {\em stationary MFNE} \citep{subramanian2019reinforcement}. MF-AIRL is compatible with infinite time horizons because non-stationary equilibria recover stationary ones as special cases.
	{\bf (3)} {\em Generalised mean fields}. Some work \citep{guo2019learning} generalises the mean field $\mu \in \Pmc(\Smc)$ to $(\mu, \alpha) \in \Pmc(\Smc \times \Amc)$  by additionally considering population's average action $\alpha \in \Pmc(\Amc)$. MF-AIRL is adaptive to generalised mean fields by incorporating the marginal distribution $\alpha$ in all mean field arguments.
	{\bf (4)}  {\em Heterogeneous agents}. A large-scale heterogeneous multi-agent system can be converted to a homogeneous system by considering the type of the agent as a component of states \citep{ganapathi2020multi}. Our MF-AIRL is, therefore, compatible with heterogeneous agents.


\bibliographystyle{ACM-Reference-Format}
\balance
\bibliography{bib.bib}

\newpage
\onecolumn

\setcounter{section}{0}
\setcounter{equation}{0}
\renewcommand{\theequation}{\thesection\arabic{equation}}
\newtheorem{lem}{Lemma}[section]
\newtheorem{cor}{Corollary}[section]
\newtheorem{prop}{Proposition}[section]
\newcommand{\boltz}{B_{\beta}}

\appendix

\begin{center}
	{\LARGE\bf Appendices}
\end{center}

\section{Proof of Theorem~\ref{thm:trajectory_ERMFNE}}\label{proof:trajectory_ERMFNE}

Our proof of Theorem~\ref{thm:trajectory_ERMFNE} relies on the following lemma from \citep{cui2021approximately}, which shows that an energy-based model can characterise the policy in ERMFNE in terms of action values.

\begin{lem}[\citep{cui2021approximately}]\label{lem:ebm}
Let $(\Smc, \Amc, P, \mu_0, r)$ be an MFG with the entropy regularisation and $(\tilde{\muv}^\star, \tilde{\piv}^\star)$ be the ERMFNE. 
	Denote the {\em action value function} (i.e., cumulative future rewards of selecting an action in a state) of $(\muv, \piv)$ by
	$$Q^{\muv, \piv_{t+1:T-1}}(s_t,a_t,\mu_t) \triangleq r(s_t, a_t, \mu_t) + \\
	\mathbb{E}_{\piv_{t+1:T-1}} \left[ \sum_{\ell = t+1}^{T-1}  r(s_\ell, a_\ell, \mu_\ell) +  \Hmc(\pi_\ell (\cdot \vert s_\ell)) \right].$$ Then, the policy $\tilde{\piv}^\star$ is in the form of 
	$$\tilde{\pi}^\star_t (a_t \vert s_t) = \frac{ e^{ Q^{\tilde{\muv}^\star, \tilde{\piv}_{t+1:T-1}}(s_t,a_t,\tilde{\mu}^\star_t) }}{\sum_{a' \in \Amc} e^{Q^{\tilde{\muv}^\star, \tilde{\piv}_{t+1:T-1}}(s_t,a',\tilde{\mu}^\star_t)}}.$$
\end{lem}

\subsection{Proof of Theorem~\ref{thm:trajectory_ERMFNE}}
\begin{proof}

Let $(\tilde{\muv}^\star, \tilde{\piv}^\star)$ be the ERMFNE for a MFG $(\Smc, \Amc, p, \mu_0, r)$. For an arbitrary policy $\piv$, the probability of a trajectory $\tau = \{(s_t,a_t)\}_{t=0}^{T-1}$ induced by $(\muv, \piv)$ satisfies the following distribution:
\begin{equation}
	p_1(\tau) = \mu_0(s_0) \cdot \prod_{t=0}^{T-1} P(s_{t+1} \vert s_t, a_t, \mu_t) \cdot \prod_{t=0}^T \pi_t(a_t \vert s_t).
\end{equation}

Recall our desired energy-based trajectory distribution formula is 
$$p_2(\tau) \propto \mu_0(s_0) \cdot \prod_{t=0}^{T-1} P(s_{t+1} \vert s_t, a_t, \mu_t) \cdot e^{R(\tau)}.$$

Let $D_{\mathrm{KL}}$ denote the Kullback-Leibler (KL) divergence. We now show that the  ERMFNE $(\tilde{\muv}^\star, \tilde{\piv}^\star)$ is the optimal solution to the following optimisation problem: 

\begin{equation}\label{eq:thm2}
	\min_{\muv, \piv}D_{\mathrm{KL}}\left(p_1(\tau) \parallel p_2(\tau)\right) \text{\em~ s.t.~ } \muv = \Phi(\piv).
\end{equation}

The constraint enforces the condition of population consistency. Thus, fixing $\muv$ as $\tilde{\muv}^\star$, to show the ERMFNE $(\tilde{\muv}^\star, \tilde{\piv}^\star)$ is the optimal solution to Eq.~\eqref{eq:thm2} is equivalent to show that $\tilde{\piv}^\star$ is the optimal solution to the following optimisation problem:

\begin{equation}\label{eq:KL}
	\min_{\piv}D_{\mathrm{KL}}\left(p_1(\tau) \parallel p_2(\tau)\right) \text{\em~ where~ } \muv = \tilde{\muv}^\star.
\end{equation}

Our proof is in a manner of dynamic programming. We construct the basic case for step $T-1$, where Eq.~\eqref{eq:KL} holds according to the definition of the policy in ERMFNE.  Then, for each $t<T-1$, we construct the optimal policy for steps from $t$ to $T-1$ based on the optimal policy that is already constructed from $t+1$ to $T-1$. We show that the constructed optimal policy that minimises the above KL divergence is consistent with $\tilde{\piv}^\star$ in ERMFNE. We next elaborate on the procedure of dynamic programming.

For simplicity, we omit the partition function for $p_2$ as it is a constant. Substituting $p_1(\tau)$ and $p_2(\tau)$ in Eq.~\eqref{eq:KL} with their definitions and roll out the KL-divergence, we obtain that maximising the KL-divergence between $p_1(\tau)$ and $p_2(\tau)$ is equivalent to the following optimisation problem
	\begin{equation}\label{eq:thm3_1}
	\begin{aligned}
		\max_{\piv}\mathbb{E}_{\tau \sim p_1} \left[ \log \frac{p_2(\tau)}{p_1(\tau)} \right] = \mathbb{E}_{\tau \sim  p_1}\Bigg[ & \log \mu_0(s_0) + \sum_{t=0}^{T-1} \left(  r(s_t, a_t, \tilde{\mu}^\star_t) + \log P(s_{t+1} \vert s_t, a_t, \tilde{\mu}^\star_t) \right) - \\
		& \log \mu_0(s_0) - \sum_{t=0}^{T-1} \left( \log \pi_t(a_t \vert s_t) + \log P(s_{t+1} \vert s_t, a_t, \tilde{\mu}^\star_t) \right) \Bigg]\\
		= \mathbb{E}_{\tau \sim  p_1} \Bigg[ & \sum_{t=0}^{T-1}  r(s_t, a_t, \tilde{\mu}^\star_t) - \log \pi_t(a_t \vert s_t)   \Bigg]
	\end{aligned}
	\end{equation}
	We maximise the objective in Eq.~\eqref{eq:thm3_1} using a dynamic programming method. 
	
	Consider the terminal step $t=T$, since the reward at the terminal step is always $0$, to maximise the entropy, the policy $\pi_T$ chooses actions evenly, i.e, $\tilde{\pi}^\star_T(a \vert s) = 1/ |\Amc|$ for any $a \in \Amc$ and $s \in \Smc$.
	
	We then construct the base case that maximises $\pi_{T-1}$:
	\begin{equation}\label{eq:thm3_2}
	\begin{aligned}
		\max_{\pi_{T-1}} ~& \mathbb{E}_{(s_{T-1}, a_{T-1}) \sim p_1} \left[ r(s_{T-1}, a_{T-1}, \tilde{\mu}^\star_{T-1}) - \log \pi_{T-1}(a_{T-1} \vert s_{T-1}) \right]\\
		= & \mathbb{E}_{(s_{T-1}, a_{T-1}) \sim p_1} \Bigg[ -D_{\mathrm{KL}} \left( \pi_{T-1}(a_{T-1} \vert s_{T-1}) \bigg\| \frac{e^{r(s_{T-1}, a_{T-1}, \tilde{\mu}^\star_{T-1})}}{ e^{ V(s_{T-1}, \tilde{\mu}^\star_{T-1})} } \right) + V(s_{T-1}, \tilde{\mu}^\star_{T-1}) \Bigg],
	\end{aligned}
	\end{equation}
	where we define $$V(s_{T-1}, \tilde{\mu}^\star_{T-1}) \triangleq \log \sum_{a' \in \Amc} e^{r(s_{T-1}, a', \tilde{\mu}^\star_{T-1})}.$$ 
	
	The optimal $\pi_{T-1}$ for Eq.~\eqref{eq:thm3_2} is 
	\begin{equation}\label{eq:thm3_3}
		\tilde{\pi}_{T-1}^\star ( a_{T-1} \vert s_{T-1} ) = \frac{e^{r(s_{T-1}, a_{T-1}, \tilde{\mu}^\star_{T-1})}}{e^{V(s_{T-1}, \tilde{\mu}^\star_{T-1})}} = \frac{e^{ r(s_{T-1}, a_{T-1}, \tilde{\mu}^\star_{T-1}) }}{\sum_{a' \in \Amc} e^{ r(s_{T-1}, a', \tilde{\mu}^\star_{T-1})}},
	\end{equation}
	which coincides with the form given in Lemma~\ref{lem:ebm}.
	
	With the optimal policy given in Eq.~\eqref{eq:thm3_3}, the KL-divergence in Eq.~\eqref{eq:thm3_2} attains $0$ and Eq.~\eqref{eq:thm3_2} attains the minimum $V(s_{T-1}, \tilde{\mu}^\star_{T-1})$.
	
	Then recursively, we can show that for any step $t < T - 1$, $\pi_t$ is the maximiser of the following optimisation problem:
	\begin{equation}\label{eq:thm3_4}
	\begin{aligned}
		\max_{\pi_t} \mathbb{E}_{(s_t, a_t) \sim p_1} \left[ -D_{\mathrm{KL}} \left( \pi_t(a_t \vert s_t) \bigg\| \frac{e^{ Q^{\tilde{\muv}^\star, \tilde{\piv}^\star_{t+1:T-1}}(s_t,a_t,\tilde{\mu}_t^\star)}}{e^{ V^{\tilde{\muv}^\star, \tilde{\piv}^\star_{t+1:T-1}}(s_t, \tilde{\mu}_t^\star) }} \right) + V^{\tilde{\muv}^\star, \tilde{\piv}^\star_{t+1:T-1}}(s_t, \tilde{\mu}_t^\star) \right],
	\end{aligned}
	\end{equation}
	where 
	$$V^{\tilde{\muv}^\star, \tilde{\piv}^\star_{t+1:T-1}}(s_t, \tilde{\mu}^\star_t) \triangleq \log \sum_{a' \in \Amc} e^{Q^{\tilde{\muv}^\star, \tilde{\piv}^\star_{t+1:T-1}}(s_t,a_t,\tilde{\mu}^\star_t)}.$$
	
	In fact, $V^{\muv, \piv_{t+1:T-1}}$ resembles the {\em soft value function} in {\em soft Q-learning} \citep{haarnoja2017reinforcement}.
	
	The optimal policy for Eq.~\eqref{eq:thm3_4} is given by $$\pi_t (a_t \vert s_t) = \frac{e^{Q^{\tilde{\muv}^\star, \tilde{\piv}^\star_{t+1:T-1}}(s_t, a_t, \tilde{\mu}^\star_t)}}{e^{V^{\tilde{\muv}^\star, \tilde{\piv}^\star_{t+1:T-1}}(s_t, \tilde{\mu}^\star_t)}} = \frac{e^{Q^{\tilde{\muv}^\star, \tilde{\piv}^\star_{t+1:T-1}}(s_t, a_t, \tilde{\mu}^\star_t) }}{\sum_{a \in \Amc} e^{Q^{\tilde{\muv}^\star, \tilde{\piv}^\star_{t+1:T-1}}(s_t, a, \tilde{\mu}^\star_t) }},$$
	which coincides with the form given in Lemma~\ref{lem:ebm}. 
\end{proof}

\section{Justifications for the Reward Shaping in MFGs}\label{app:shaping}
We show in the following theorem that for any potential function $g: \Smc \times \Pmc(\Smc) \to \Rbb$, the potential-based reward shaping can ensure the invariance of both ERMFNE and MFNE.

\begin{theorem}
	Let any $\Smc, \Amc$ be given. We say $F: \Smc \times \Amc \times \Pmc(\Smc) \times \Smc \times \Pmc(\Smc) \to \Rbb$ is a {\em potential-based reward shaping} for MFG if there exists a real-valued function $g: \Smc \times \Pmc(\Smc) \to \Rbb$ such that $F(s_t,a_t,\mu_t,s_{t+1},\mu_{t+1}) =  g(s_{t+1}, \mu_{t+1}) - g(s_t, \mu_t)$. Then, $F$ is sufficient and necessary to guarantee the invariance of the set of MFNE and ERMFNE in the sense that:
		\begin{itemize}
			\item {\bf Sufficiency:} Every MFNE or ERMFNE in the MFG $\Mmc' = (\Smc, \Amc, P, \mu_0, r + F)$ is also a MFNE or ERMFNE in $\Mmc = (\Smc, \Amc, P, \mu_0, r)$;
			\item {\bf Necessity:} If $F$ is not a potential-based reward shaping, then there exist an initial mean field $\mu_0$, transition function $P$, horizon $T$, and reward function $r$ such that no MFNE or ERMFNE in $\Mmc'$ is an equilibrium in $\Mmc$.
		\end{itemize}
\end{theorem}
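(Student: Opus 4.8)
The plan is to reduce to the classical fact that potential-based shaping preserves optimal policies in an MDP \cite{ng1999policy}, by exploiting that \emph{once an MF flow is frozen} the mean field is a deterministic function of time. Fix any pair $(\flow^*,\policy^*)$ and write $g_t(\cdot):=g(\cdot,\mu^*_t)$; along $\flow^*$ the shaping $F$ degenerates to the ordinary potential-based shaping $\gamma g_{t+1}(s_{t+1})-g_t(s_t)$ of the (non-stationary) MDP induced by $\flow^*$. I would then run a short backward induction on the soft Bellman recursion behind Eq.~\eqref{eq:softQMFG} to show that the soft $Q$-functions of $\M'=(\S,\A,p,\mu_0,r+F,\gamma)$ and of $\M$ differ by exactly the potential, $Q'_{\soft}(s_t,a_t,\mu^*_t)=Q_{\soft}(s_t,a_t,\mu^*_t)-g(s_t,\mu^*_t)$, under the usual finite-horizon convention that the potential vanishes at the terminal step (equivalently, $\sum_t\gamma^tF$ telescopes to a quantity depending only on the given $\mu_0$). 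Since $g(s_t,\mu^*_t)$ is independent of $a_t$, the Boltzmann policy $\propto\exp(Q'_{\soft}(\cdot))$ coincides with $\propto\exp(Q_{\soft}(\cdot))$, so the selection $\tilde{\Psi}$ is identical in $\M$ and $\M'$; the same computation with the ordinary (hard) Bellman optimality operator shows the best-response set $\Psi(\flow^*)$ is unchanged. As the consistency map $\Phi$ (the MKV equation, Eq.~\eqref{eq:MKV}) does not involve the reward at all, both equilibrium conditions — $\policy^*\in\Psi(\flow^*)$ (resp.\ $\policy^*=\tilde{\Psi}(\flow^*)$) together with $\flow^*=\Phi(\policy^*)$ — hold in $\M'$ iff they hold in $\M$, which gives the MFNE and ERMFNE sufficiency claims simultaneously (and in fact the converse inclusion too, by replacing $g$ with $-g$).

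\textbf{Necessity.} Here I would argue the contrapositive, adapting the counterexample constructions of \cite{ng1999policy} (MDPs) and \cite{devlin2011theoretical} (stochastic games), the only new ingredient being the mean-field coupling. Assume $F$ is \emph{not} a potential-based shaping, i.e.\ the linear system $\{\gamma g(s',\mu')-g(s,\mu)=F(s,a,\mu,s',\mu')\}$ in the unknowns $\{g(s,\mu)\}$ is infeasible; then (since $\gamma\in(0,1)$ the equations around any single cycle are always solvable) the obstruction surfaces either as genuine dependence of $F$ on the action for some fixed $(s,\mu,s',\mu')$, or as an inconsistency between two distinct short chains of one-step-compatible configurations that share their endpoints. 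In either case I would construct a small MFG whose population dynamics is \emph{decoupled from the representative agent's choices} — e.g.\ a kernel $p(s'\mid s,a,\mu)$ whose induced push-forward of the mean field is policy-independent — so that the equilibrium MF flow is forced to be a single fixed sequence $\{\mu^*_t\}$, and choose $\mu_0$, $p$ and $T$ so that this sequence visits precisely the configurations witnessing the obstruction; along $\{\mu^*_t\}$ the shaping degenerates to a time-indexed function $\tilde{F}_t(s,a,s'):=F(s,a,\mu^*_t,s',\mu^*_{t+1})$ that is not potential-based in the induced MDP. Finally, exactly as in the gadget of \cite{ng1999policy}, I would pick $r$ (and, for the ERMFNE claim, a temperature $\beta$ large enough that \cite[Theorem~3]{cui2021approximately} applies to both $\M$ and $\M'$) so that at the witnessing state/time one action is strictly optimal under $r$ by an arbitrarily small margin $\delta>0$ while $\tilde{F}$ reverses that ordering. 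Then the equilibrium policy of $\M'$ (unique, for the ERMFNE) differs from every best response under $r$ at that configuration, so it is strictly suboptimal for $r$ against $\{\mu^*_t\}$ and hence is not an equilibrium of $\M$; the strictness of $\delta$ is what guarantees that \emph{every} MFNE of $\M'$ fails, not merely one.

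\textbf{Main obstacle.} The substantive difficulty lies entirely in the necessity direction and is specific to MFG: $F$ reads the \emph{endogenous} next mean field $\mu_{t+1}$, which is produced by the policy through the MKV equation, so one cannot freely place the system in a prescribed configuration $(s,\mu)$, and the ``effective'' shaping experienced at an equilibrium is itself a function of the equilibrium flow. The decoupling device (population dynamics insensitive to the representative agent) is what I expect to carry the argument, but two points need care. First, one must check that restricting to such instances loses no generality — that \emph{any} way $F$ can fail to be potential-based over the full domain $\S\times\A\times\PP(\S)\times\S\times\PP(\S)$ can be realised along the fixed flow of \emph{some} finite MFG with the given $\S,\A,\gamma$ (here $\gamma\in(0,1)$ helps, since it forces the obstruction onto a short reachable chain rather than a long cycle). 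Second, one must rule out \emph{all} equilibria of $\M'$: for ERMFNE this follows from the uniqueness in \cite[Theorem~3]{cui2021approximately}, but for MFNE it relies on the strict-margin construction so that no best response under $r$ can agree with any best response under $r+F$ at the witnessing configuration. I would also re-confirm that the terminal-potential convention used in the sufficiency telescoping is the one implicit in the backward definition of $\tilde{\Psi}$ via Eq.~\eqref{eq:softQMFG}.
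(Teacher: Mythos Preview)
Your sufficiency argument is essentially the paper's: fix an arbitrary MF flow, observe that along it the shaping becomes an ordinary potential-based shaping in the induced (non-stationary) MDP, and show via the Bellman/soft-Bellman recursion that the (soft) $Q$-values shift by the action-independent quantity $g(s_t,\mu_t)$, leaving both the hard best-response set and the Boltzmann selection unchanged; since $\Phi$ ignores rewards, equilibria are preserved. The paper writes this out directly for both the hard and soft cases rather than citing \cite{ng1999policy}, but the content is the same.

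For necessity the two routes diverge. The paper does \emph{not} carry out a general contrapositive construction: it exhibits a single concrete MFG (the Left--Right problem with $\S=\{C,L,R\}$, $\A=\{L,R\}$, deterministic transitions, $T=1$) and a single action-dependent shaping $F$ (penalising action $L$ by $-1$), and computes by hand that both the MFNE and the ERMFNE change. This is considerably simpler than your plan, but strictly speaking it only shows that \emph{some} non-potential-based $F$ breaks invariance, not that \emph{every} such $F$ does --- so it does not fully discharge the universally quantified statement of the theorem. Your proposal, by contrast, aims at the full claim by forcing a policy-independent MF flow and then invoking the generic Ng--Harada--Russell gadget along that flow. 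What this buys you is an honest proof of the stated necessity; what it costs is exactly the two obstacles you flag, the more serious of which is realisability: an obstruction to potential-basedness of $F$ may live at mean fields $\mu,\mu'$ that are not adjacent under any transition kernel on the given $\S$ (e.g.\ if the obstruction sits at a single atom $(s,a,\mu,s',\mu')$ with $\mu'$ not in the convex hull of pushforwards of $\mu$), and your ``short chain'' remark does not yet explain how to route around this. If you can make that realisability step precise, your argument would be strictly stronger than the paper's; if not, the paper's single worked example is at least a clean demonstration that potential-basedness cannot be weakened.
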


\begin{proof}
	We first prove the sufficiency. 
	First, we show the set of MFNE remains invariant under the potential-based reward shaping $F$. Let $\muv$ be an arbitrary MF flow. The optimal action function for the MFG induced by $\muv$, denoted by $Q^\star$, fulfil the Bellman equation:
		
	$$Q^\star(s_t, a_t) = r(s_t, a_t, \mu_t) + \mathbb{E}_{s_{t+1} \sim p} \left[ \max_{a_{t+1} \in \Amc} Q^\star(s_{t+1}, a_{t+1})  \right].$$
	
	Applying some simple algebraic manipulation gives:
	
	\begin{equation*}
	\begin{aligned}
		&~~~~~Q^\star(s_t, a_t) - g(s_t, \mu_t)\\
		 &= r(s_t, a_t, \mu_t) - g(s_t, \mu_t) +  g(s_{t+1}, \mu_{t+1}) +  \mathbb{E}_{s_{t+1} \sim P}\left[ \max_{a_{t+1} \in \Amc} ( Q^\star(s_{t+1}, a_{t+1}) - g(s_{t+1}, \mu_{t+1}) ) \right]\\ 
		 &=r(s_t, a_t, \mu_t) + F(s_t,a_t,\mu_t,s_{t+1},\mu_{t+1})  +  \mathbb{E}_{s_{t+1} \sim P}\left[ \max_{a_{t+1} \in \Amc} ( Q^\star(s_{t+1}, a_{t+1}) - g(s_{t+1}, \mu_{t+1}) ) \right].\\
	\end{aligned}
	\end{equation*}
	
	From here, we know that the above equation is exactly the Bellman equation induced by $\muv$ with the reward function $r + F$, and $Q^\star(s_t, a_t) - g(s_t, \mu_t)$ is the unique set of optimal Q values. Since $\argmax_{a\in \Amc} Q^\star(s_t, a_t) - g(s_t, \mu_t) = \argmax_{a\in \Amc} Q^\star(s_t, a_t)$, we have that for any fix a MF flow, any optimal policy of $\Mmc'$ is also an optimal policy of $\Mmc$. Hence, we finish the proof of the sufficiency of MFNE.
	
	Next, we will show the sufficiency of ERMFNE. We write $\tilde{Q}^\star$ for the optimal action-value functions for the MFG with the entropy regularisation such that
	\begin{equation*}
	\begin{aligned}
		\tilde{Q}^\star(s_t,a_t) = r(s_t, a_t, \mu_t) +  \mathbb{E}_{s_{t+1} \sim p} \left[ \sum_{a_{t+1} \in \Amc} \frac{e^{ \tilde{Q}^\star(s_{t+1}, a_{t+1})}}{\sum_{a' \in \Amc}e^{\tilde{Q}^\star(s_t,a')}}  \tilde{Q}^\star(s_{t+1}, a_{t+1}) \right].
	\end{aligned}
	\end{equation*}
	
	Using the same manner as we show the invariance of MFNE, we have:
	\begin{equation*}
		\begin{aligned}
			&~~~~~\tilde{Q}^\star(s_t,a_t) - g(s_t, \mu_t)\\
			&= r(s_t, a_t, \mu_t) - g(s_t, \mu_t) +  g(s_{t+1}, \mu_{t+1})\\
			&~~~~~ +  \Ebb_{s_{t+1} \sim P}\left[ \frac{e^{\tilde{Q}^\star(s_{t+1}, a_{t+1}) - g(s_{t+1}, \mu_{t+1}) }}{\sum_{a' \in \Amc}e^{\tilde{Q}^\star(s_t,a') - g(s_{t+1}, \mu_{t+1})}}  (\tilde{Q}^\star(s_{t+1}, a_{t+1}) - g(s_{t+1}, \mu_{t+1})) \right]\\
			&= r(s_t, a_t, \mu_t) - g(s_t, \mu_t) +  g(s_{t+1}, \mu_{t+1})\\
			&~~~~~ + \Ebb_{s_{t+1} \sim P}\left[ \frac{e^{ \tilde{Q}^\star(s_{t+1}, a_{t+1}) }}{\sum_{a' \in \Amc}e^{\tilde{Q}^\star(s_t,a')}}  (\tilde{Q}^\star(s_{t+1}, a_{t+1}) - g(s_{t+1}, \mu_{t+1})) \right].\\
		\end{aligned}
	\end{equation*}
	
	Hence, we know that $\tilde{Q}^\star(s_t,a_t) - g(s_t, \mu_t)$ is the set of optimal action values induced by $\muv$ under the reward function $r + F$. Thus, any optimal policy of $\Mmc'$ is also an optimal policy of $\Mmc$. Hence, we finish the proof of sufficiency for MFNE.
	
	We next show the necessity by constructing a counter-example where a non-potential-based reward shaping can change the set of MFNE and ERMFNE. Consider the {\em Left-Right} problem \citep{cui2021approximately}, which is also used as a task in experiments. At each step, each agent is at a position (state) of either ``left'', ``right'' or ``center'', and can choose to move either ``left'' or ``right'', receives a reward according the current population density (mean field) at each position, and with probability one (dynamics) reaches ``left'' or ``right''. Once an agent leaves the ``centre'', she can never head back and can only be in either left or right thereafter. Formally, we configure the MFG as follows: $\Smc = \{C, L, R\}$, $\Amc = \Smc \setminus \{C\}$, initial mean field $\mu_0(C) = 1$ (i.e., all agents are at ``center'' initially) and the reward function $$r(s,a,\mu_t) = -\mathds{1}_{\{s = L\}}\cdot \mu_t(L) -\mathds{1}_{\{s = R\}}\cdot \mu_t(R).$$ This reward setting means that each agent will incur a negative reward determined by the population density at her current position. The transition function is deterministic that directs an agent to the next state with probability one: $$P(s_{t+1} \vert s_t, a_t, \mu_t) = \mathds{1}_{\{s_{t+1} = a_t\}}.$$ This configuration is illustrated below. 
 	
	\begin{figure*}[!h]
		\centering
		\includegraphics[width=.6\textwidth]{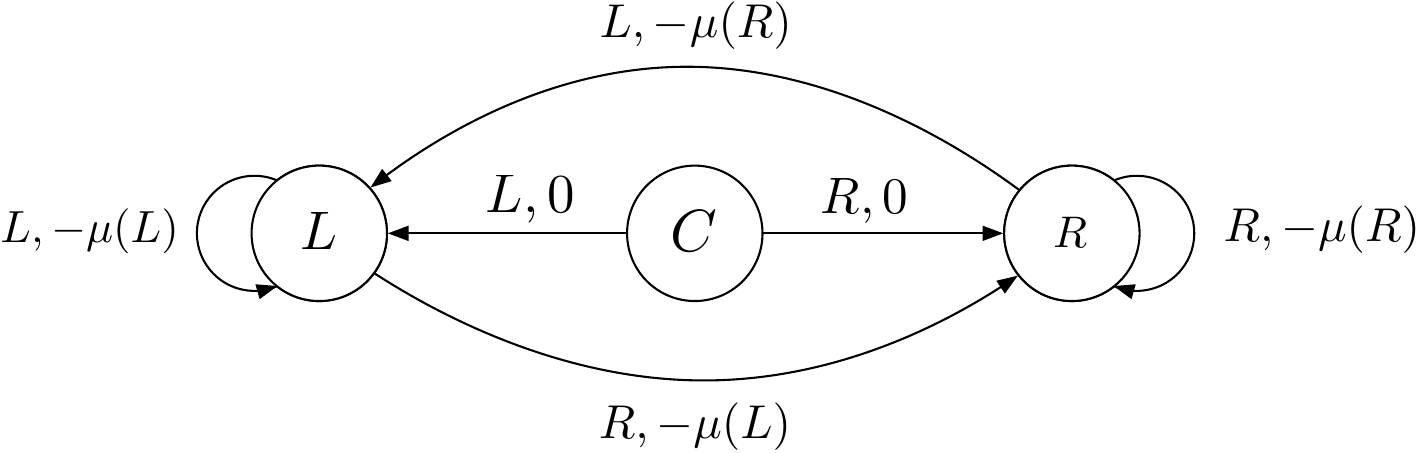}
	\end{figure*}
	
	Now, we consider the case that the time horizon $=1$.
	
	Since all agents are at ``center'' initially, we have that $\mu_1^\star(L) = \pi_0^\star(L \vert C)$ and $\mu_1^\star(R) = \pi_0^\star(R \vert C)$. Therefore, we have that the expected return of each agent under MFNE is 
	\begin{equation*}
	\begin{aligned}
		&~ -1 \cdot \pi_0^\star(L \vert C) \cdot \mu_1^\star(L) - 1 \cdot \pi_0^\star(R \vert C) \cdot \mu_1^\star(R)\\
		= &~ - (\pi_0^\star(L \vert C))^2 -  (1 - \pi_0^\star(L \vert C))^2 \\
		= &~ -\left( 2(\pi_0^\star(L \vert C))^2 - 2 \pi_0^\star(L \vert C) + 1 \right).
	\end{aligned}
	\end{equation*}
	
	Clearly, the expected return attains the maximum when $\pi_0^\star(L \vert C) = 1/2$. Therefore, any MFNE $(\muv^\star, \piv^\star)$ under the configuration above must fulfil $\pi^\star_0(R \vert C) = \pi^\star_0 (L \vert C) = 1/2$ and $\pi^\star_1$ can be arbitrary. And clearly, there exists a unique ERMFNE $(\tilde{\muv}^\star, \tilde{\piv}^\star)$ such that any action at any state and any time step is chosen with probability $1/2$. This result is also shown as a case study in \citep{cui2021approximately}. 
	
	Next, we change the reward function by adding an additional reward based on {\em actions} to the original reward function. We penalise the action ``left'' by a negative value $-1$, i.e., $$r'(s,a,\mu_t) = r(s,a,\mu_t) - \mathds{1}_{\{a = L\}} = -\mathds{1}_{\{s = L\}}\cdot \mu_t(L) -\mathds{1}_{\{s = R\}}\cdot \mu_t(R) - \mathds{1}_{\{a = L\}}.$$
	This equivalent to that an action-based reward shaping function $F(s_t,a_t,\mu_t,s_{t+1},\mu_{t+1}) = g(s_{t+1}, a_{t+1}, \mu_{t+1}) - g(s_t, a_t, \mu_t)$ is added to the original reward function where $$g(s_t, a_t, \mu_t) = - \mathds{1}_{\{a_t = L\}}.$$ The following diagram shows this new reward configuration.
	\begin{figure*}[!h]
		\centering
		\includegraphics[width=.6\textwidth]{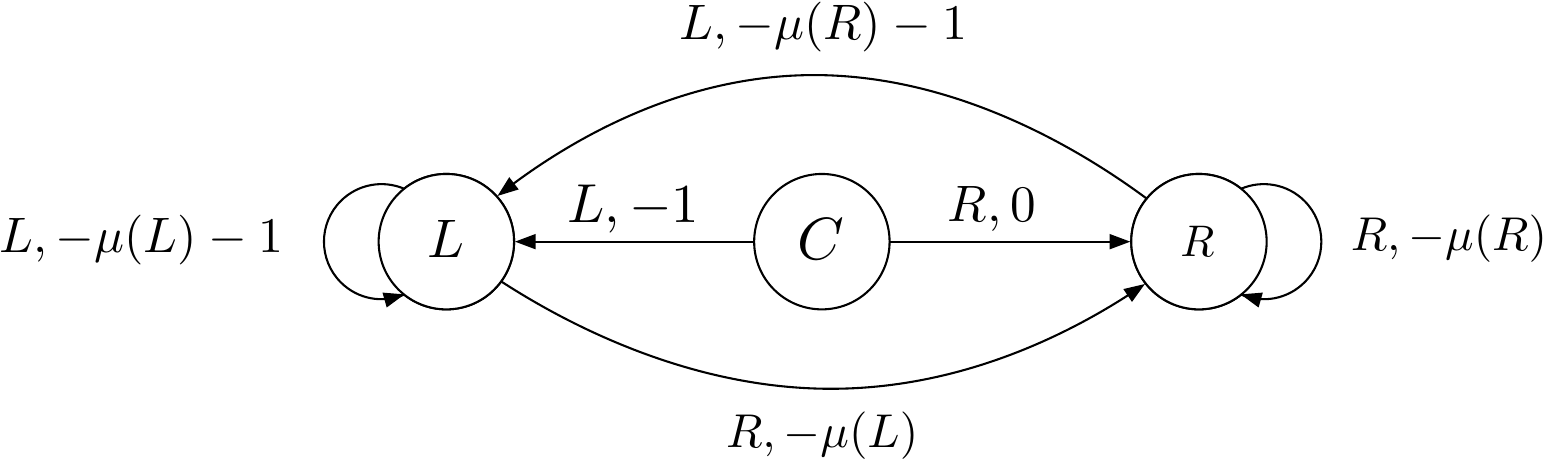}
	\end{figure*}
	
	Now, let us investigate the form of MFNE and ERMFNE under this new reward configuration. We first show the set of MFNE induced by the new reward function is no longer the same as that induced by the original reward function. Consider the step $t=1$ (the last step), since the reward of moving right is always higher than moving left by 1 and MFNE aims to maximise cumulative rewards, all agent will move right, i.e., $\pi_1^\star(R \vert L) = \pi_1^\star(R \vert R) = 1$. Hence, each agent earns a reward $-\mu_1^\star(L)$ if she is at ``left'' and $-\mu_1^\star(R)$ otherwise. Using the fact that $\mu_1^\star(L) = \pi_0^\star(L \vert C)$ and $\mu_1^\star(R) = \pi_0^\star(R \vert C)$, we have that the expected return of each agent under MFNE is 
	\begin{equation*}
	\begin{aligned}
		&~ -1 \cdot \pi_0^\star(L \vert C) + 0 \cdot  \pi_0^\star(R \vert C) -\mu_1^\star(L) \cdot \pi_0^\star(L \vert C) - \mu_1^\star(R) \cdot  \pi_0^\star(R \vert C)\\
		= &~ - \pi_0^\star(L \vert C)  -  (\pi_0^\star(L \vert C))^2 - (\pi_0^\star(R \vert C))^2\\
		= &~ - \pi_0^\star(L \vert C)  -  (\pi_0^\star(L \vert C))^2 - (1 - \pi_0^\star(L \vert C))^2\\
		= &~ - \left( 2 (\pi_0^\star(L \vert C))^2 - \pi_0^\star(L \vert C) + 1 \right).
	\end{aligned}
	\end{equation*}
	
	From here, we have that the expected return attains the maximum when $\pi_0^\star(L \vert C) = 1/4$, contradicting the MFNE induced by the original reward function.

	Next, we show that the ERMFNE induced by the new reward function also changes. Again, consider the last step. According to the definition of ERMFNE, we have 
	\begin{equation*}
		\tilde{\pi}^\star_1(L \vert L) = \frac{e^{-\tilde{\mu}_1^\star(L)}}{e^{-\tilde{\mu}_1^\star(L)} + e^{-\tilde{\mu}_1^\star(L) - 1}}. 
	\end{equation*}
	
	Therefore, $\tilde{\pi}^\star_1(L \vert L) = 1 / 2$ if and only if $\tilde{\mu}_1^\star(L) = \tilde{\mu}_1^\star(L) + 1$. A contradiction occurs. 
\end{proof}

\section{Descriptions for Numerical Tasks}\label{app:task}

The detailed descriptions below are adapted from \citep{chen2022individual}.

\subsection{Investment in Product Quality}

{\bf Model.} This model is adapted from \citep{weintraub2010computational} and \citep{subramanian2019reinforcement} that captures the investment decisions in a fragmented market with a large number of firms. Each firm produces the same kind of product. The state of a firm $s \in \Smc = \{ 0,1, \ldots, 9\}$ denotes the product quality. At each step, each firm decides whether or not to invest in improving the quality of the product. Thus the action space is $\Amc = \{0, 1\}$. When a firm decides to invest, its product quality increases uniformly at random from its current value to the maximum value 9 if the average quality in the market for that product is below a particular threshold $q$. If this average quality value is above $q$, then the product quality gets only half of the improvement compared to the former case. This implies that when the average quality in the economy is below $q$, it is easier for each firm to improve its quality. When a firm does not invest, its product quality remains unchanged. Formally, the dynamics is given by:

\begin{equation*}
	s_{t+1} = \left\{
	\begin{aligned}
		 & s_{t} + \lfloor \chi_t  ( 10- s_{t} ) \rfloor, \text{~~if~~}\langle \mu_{t} \rangle < q \text{~~and~~} a_{t} = 1\\
		 & s_{t} + \lfloor \chi_t  ( 10- s_{t} ) /2 \rfloor, \text{~~if~~}\langle \mu_{t} \rangle \geq q\text{~~and~~}a_{t} = 1\\
		 & s_{t}, \text{~~if~~}a_{t} = 0
	\end{aligned}\right..
\end{equation*} 

An agent incurs a cost due to its investment and earns a positive reward due to its own product quality and a negative reward due to the average product quality, which we denote by 
\begin{equation}\label{eq:average_mf}
\langle \mu_t \rangle \triangleq \sum_{s \in \Smc} s \cdot \mu_t(s).    
\end{equation}
The final reward is given as:
\begin{equation*}
	r(s_t,a_t,\mu_t) =  d \cdot s_{t} / 10 - c \cdot \langle \mu_{t} \rangle - \alpha \cdot a_{t} 
\end{equation*}

\subsubsection{Settings.}  We set $d=0.3$, $c=0.2$, $\alpha=0.2$ and probability density $f$ for $\chi_t$ as $U(0,1)$. We set the threshold $q$ to $4$ and $5$, respectively. The initial mean field $\mu_0$ is set as a uniform distribution, i.e, $\mu_0(s) = 1 / |\Smc|$ for all $s \in \Smc$.

\subsection{Malware Spread}

\subsubsection{Model.}  The malware spread model is presented in \citep{huang2016mean,huang2017mean} and used as a simulated study for MFG in \citep{subramanian2019reinforcement}. This model is representative of several problems with positive externalities, such as flu vaccination and economic models involving the entry and exit of firms. 
Here, we present a discrete version of this problem:  Let $\Smc = \{0, 1, \ldots, 9\}$ denote the state space (level of infection), where $s = 0$ is the most healthy state and $s = 9$ is the least healthy state. The action space $\Amc = \{0, 1\}$, where $a = 0$ means $\mathtt{Do Nothing}$ and $a = 1$ means $\mathtt{Intervene}$. The dynamics is given by
\begin{equation*}
	s_{t+1} = \left\{
	\begin{aligned}
		& s_t + \lfloor \chi_t  ( 10- s_t ) \rfloor,  \text{~~if~~}  a_t = 0\\
		& 0, \text{~~if~~}a_t = 1
	\end{aligned}\right.,
\end{equation*}
where $\{\chi_t\}_{0\leq t \leq T}$ is a $[0, 1]$-valued i.i.d. process with probability density $f$. The above dynamics means the $\mathtt{Do Nothing}$ action makes the state deteriorate to a worse condition, while the $\mathtt{Intervene}$ action resets the state to the most healthy level. Rewards are coupled through the average health level of the population, i.e., $\langle \mu_{t} \rangle$ as defined in Eq.\eqref{eq:average_mf}. An agent incurs a cost $(k + \langle \mu_t \rangle) s_t$, which captures the risk of getting infected, and an additional cost of $\alpha$ for performing the $\mathtt{Intervene}$ action. The reward sums over all negative costs:
\begin{equation*}
	r(s_t, a_t, \mu_t) = -(k + \langle \mu_t \rangle)s_t/10 - \alpha \cdot a_t.
\end{equation*}

\subsubsection{Settings.} Following \citep{subramanian2019reinforcement}, we set $k = 0.2$, $\alpha = 0.5$, and the probability density $f$ to the uniform distribution $U(0,1)$ for the original dynamics. The initial mean field $\mu_0$ is set as a uniform distribution.

\subsection{Virus Infection} 

\subsubsection{Model.} This is a virus infection used as a case study in \citep{cui2021approximately}. There is a large number of agents in a building. Each can choose between ``social distancing'' ($D$) or ``going out'' ($U$). If a ``susceptible'' ($S$) agent chooses social distancing, they may not become ``infected'' ($I$). Otherwise, an agent may become infected with a probability proportional to the number of infected agents. If infected, an agent will recover with a fixed chance every time step. Both social distancing and being infected have an associated negative reward.
Formally, let $\Smc = \{S,I\}, \Amc = \{U,D\}, r(s,a, \mu_t) = -\mathds{1}_{\{s = I\}} - 0.5 \cdot \mathds{1}_{\{s = D\}}$. The transition probability is given by
\begin{equation*}
	\begin{aligned}
		P(s_{t+1} = S \vert s_t = I, \cdot, \cdot) & = 0.3\\
		P(s_{t+1} = I \vert s_t = S, a_t = U, \mu_t) & = 0.9^2 \cdot \mu_t(I)\\
		P(s_{t+1} = I \vert s_t = S, a_t = D, \cdot) &= 0.
	\end{aligned}
\end{equation*}

\subsubsection{Settings.} The initial mean field $\mu_0$ is set as a uniform distribution.

\subsection{Rock-Paper-Scissors} This model is adapted by \citep{cui2021approximately} from the generalized non-zero-sum version of {\em Rock-Paper-Scissors} game \citep{shapley1964some}. Each agent can choose between ``rock'' ($R$), ``paper'' ($P$) and ``scissors'' ($S$), and obtains a reward proportional to double the number of beaten agents minus the number of agents beating the agent. Formally, let $\Smc = \Amc = \{R, P, S\}$, and for any $a \in \Amc, \mu_t \in \Pmc(\Smc)$:
\begin{equation*}
	\begin{aligned}
		r(R, a, \mu_t) &= 2 \cdot \mu_t(S) - 1\cdot \mu_t(P),\\
		r(P, a, \mu_t) &= 4 \cdot \mu_t(R) - 2 \cdot \mu_t(S),\\
		r(S, a, \mu_t) &= 6 \cdot \mu_t(P) - 3 \cdot \mu_t(R).
	\end{aligned}
\end{equation*}

The transition function is deterministic: $p(s_{t+1} \vert s_t, a_t, \mu_t) = \mathds{1}_{\{s_{t+1} = a_t\}}.$

\subsubsection{Settings.} The initial mean field $\mu_0$ is set as a uniform distribution.

\subsection{Left-Right}

\subsubsection{Model.} This model is used in \citep{cui2021approximately}. A group of agents makes sequential decisions to move ``left'' or ``right''. At each step, each agent is at a position (state) either ``left'', ``right'' or ``center'', and can choose to move either ``left'' or ``right'', receives a reward according the current population density (mean field) at each position, and with probability one (dynamics) they reach ``left'' or ``right''. Once an agent leaves ``centre'', she can never head back and can only be on the left or right thereafter. Formally, we configure the MFG as follows: $\Smc = \{C, L, R\}$, $\Amc = \Smc \setminus \{C\}$, the reward $$r(s,a,\mu_t) = -\mathds{1}_{\{s = L\}}\cdot \mu_t(L) -\mathds{1}_{\{s = R\}}\cdot \mu_t(R).$$ This reward setting means each agent will incur a negative reward determined by the population density at her current position. The transition function is deterministic that directs an agent to the next state with probability one: $P(s_{t+1} \vert s_t, a_t, \mu_t) = \mathds{1}_{\{s_{t+1} = a_t\}}.$

\subsubsection{Settings.} The initial mean field $\mu_0$ is set as $\mu_0(L) = \mu_0(R) = 0.5$. 

\section{Implementation Details}\label{app:exp}

The experimental settings below are adapted from \citep{chen2022individual}.

\subsection{Feature representations.} We use one-hot encoding to represent states and actions. Let $\{1,2,\ldots, |\Smc|\}$ denote an enumeration of $\Smc$ and $\left[s_{[1]}, s_{[2]}, \ldots, s_{[|\Smc|]}\right]$  denote a vector of length $|\Smc|$, where each component stands for a state in $\Smc$. The state $j$ is denoted by $\big[0, \ldots, 0, s_{[j]} = 1, 0,$ $\ldots, 0 \big]$. An action is represented in the same manner. A mean field $\mu$ is represented by a vector $\left[\mu(s_{[1]}), \mu(s_{[2]}), \ldots, \mu(s_{[|\Smc|]})\right]$, where $\mu(s_{[i]})$ denotes the proportion of agents that are in the $i$th state. 

\subsection{Reward Models and Adaptive Samplers.} The reward mode $r_\omega$ takes as input the concatenation of feature vectors of $s$, $a$ and $\mu$ and outputs a scalar as the reward. We adopt the neural network (a four-layer perceptron) with the Adam optimiser and the Leaky ReLU activation function. The sizes of the two hidden layers are both 64. The learning rate is $10^{-4}$. The adaptive sampler $\pi^{\theta_t}$ ($0 \leq t < T$) takes as input the feature vector of a state and outputs a distribution over the action set. The neural network architecture for each adaptive sampler is a five-layer perceptron with the Adam optimiser and the Leaky ReLU activation function. The sizes of the first two hidden layers are both 64, the size of the third hidden layer is identical to the size of the action set, and the last layer is a softmax layer for generating a distribution over the action set.

\subsection{Computation of ERMFNE.} 
In ERMFNE expert training, we repeat the fixed point iteration to compute the MF flow. We terminate at the $i$th iteration if the mean squared error over all steps and all state is below or equal to $10^{-10}$, i.e., $$\frac{1}{(T-1)|\Smc|} \sum_{t=1}^{T-1}\sum_{s \in \Smc} \left(\mu^{(i)}_t(s) - \mu^{(i-1)}_t(s)\right)^2 \leq 10^{-10}.$$
\end{document}